\def \Re {\mathbb{R}}
\def \Diag {\mathrm{Diag}}
\newcommand{\myparagraph}[1]{\medskip \noindent \textbf{#1}}
\newenvironment{proof}[1][Proof]{\textbf{#1.} }{\ \rule{0.5em}{0.5em}}
\crefname{section}{Sec.}{Secs.}
\Crefname{section}{Section}{Sections}
\Crefname{table}{Table}{Tables}
\crefname{table}{Tab.}{Tabs.}
\def\mcr2{MCR$^2$}
\begin{document}

\title{Efficient Maximal Coding Rate Reduction by Variational Forms}

\author{Christina Baek$^*$\\
Carnegie Mellon University\\
{{\tt\small kbaek@cs.cmu.edu}   \ \ {\small *equal contribution}}\\
\and
Ziyang Wu$^*$\\
International Digital Economy Academy\\
{{\tt\small wuziyang@idea.edu.cn}  \ \ {\small *equal contribution}}
\and
Kwan Ho Ryan Chan \\
Johns Hopkins University\\
{\tt \small kchan49@jhu.edu}
\and 
Tianjiao Ding \\
Johns Hopkins University\\
{\tt \small tding@jhu.edu}
\and
Yi Ma \\
University of California, Berkeley\\
{\tt\small yima@eecs.berkeley.edu}
\and
Benjamin D. Haeffele\\
Johns Hopkins University\\
{\tt\small bhaeffele@jhu.edu}
}
\maketitle

\begin{abstract}
The principle of Maximal Coding Rate Reduction (\mcr2) has recently been proposed as a training objective for learning discriminative low-dimensional structures intrinsic to high-dimensional data to allow for more robust training than standard approaches, such as cross-entropy minimization. 
However, despite the advantages that have been shown for \mcr2 training, \mcr2 suffers from a significant computational cost due to the need to evaluate and differentiate a significant number of log-determinant terms that grows linearly with the number of classes.
By taking advantage of variational forms of spectral functions of a matrix, we reformulate the MCR$^2$ objective to a form that can scale significantly without compromising training accuracy. Experiments in image classification  
demonstrate that our proposed formulation results in a significant speed up over optimizing the original MCR$^2$ objective directly and often results in higher quality learned representations.  Further, our approach may be of independent interest in other models that require computation of log-determinant forms, such as in system identification or normalizing flow models. 
\end{abstract}
\section{Introduction}
\label{sec:intro}
Given a classification task, deep networks aim to learn a nonlinear mapping, consisting of a series of linear and nonlinear functions, that can map data to their correct labels. The overall deep network can often be interpreted as a composition of a nonlinear ``featurizer'' $f_\theta$ and a linear classifier $g(\mb z) = \mb W \mb z $ for some matrix $\mb W$. The hidden layers or the \textit{featurizer} is designated with learning a latent representation $\mb z_\theta = f_\theta(\mb x) \in \mathbb{R}^d$ that best facilitates the final layer or \textit{classifier} for the downstream task. 


The canonical way to train a deep learning model for a classification task is empirical risk minimization using cross-entropy (CE) loss. While CE measures the difference between the model's prediction and the true labels, it does not explicitly enforce any structure over the representation. In fact, Papyan, Han, and Donoho \cite{collapse, hui2022limitations} show that this direct label fitting implicitly leads to \textit{neural collapse} in deep networks. That is, as CE loss converges to $0$, the representations of each class at the last hidden layer collapse to a single point, suppressing within-class variability.

Beyond neural collapse and failing to represent within-class variation, several works \cite{shah2020simplicity, geiros2020, scimeca2021} have empirically shown that training neural networks using stochastic gradient descent (SGD) on CE loss often leads the network to utilize the simplest, often spurious, feature in the image for classification. 
This hypothesis is theoretically supported by 
\cite{allenzhu2021distillation} which verified that when multiple explanations can describe a class, models trained with CE often pick a subset of features that can classify a majority of the points well and then classifies the remaining points from noise in the data.

To alleviate this issue, Yu et al. \cite{mcr} proposed a framework for learning geometrically meaningful representations, via a featurizer $f_\theta$, by maximizing the coding rate reduction (MCR$^2$). In brief, the MCR$^2$ objective encourages the latent representation of the entire training set to \textit{expand} or occupy as much volume as possible, while simultaneously pushing each class to \textit{compress} or occupy as little space as possible. Empirically and theoretically, it is shown that this objective drives the latent representations of each class to lie on a low dimensional \textit{linear} subspace, with the subspaces orthogonal to each other, 
which empirically 
provides robustness against label noise, a notable advantage of \mcr2 compared to CE \cite{mcr}.

However, despite these inherent advantages, the MCR$^2$ metric can be costly from a computational perspective. In particular, the loss involves calculating the $\log \det$ of the Gram matrix of the representations of each class. Not only does the number of $\log\det$ terms  grow linearly with the number of classes, but computing (and back-propagating) the $\log \det$ of a $d \times d$ matrix incurs a computational cost of $O(d^3)$. For this reason, \mcr2 methods to date have been limited to datasets with a relatively small number of classes such as MNIST and CIFAR-10, where the loss is computationally feasible.  
In order to make  MCR$^2$ scalable,  there is a significant need to improve the computational efficiency, particularly as it pertains to computing $\log \det$ terms, to allow for large numbers (hundreds or thousands) of classes in high dimensional spaces. 
%
%

\myparagraph{Contributions.} In this paper we make significant progress towards this goal, with the following contributions:
\begin{enumerate}
    \item We provide an alternative formulation of the MCR$^2$ objective based on a variational form of the $\log \det$ function which scales much more gracefully with the number of classes and the problem dimension.
    \item We show experimentally that the alternative formulation requires approximately the same number of epochs to converge as the original \mcr2 formulation, but achieves a significant speedup in the training cost per epoch, particularly as the number of classes in the dataset grows.
    \item As an additional benefit of our formulation, we also observe empirically that training over our proposed variational formulation often results in higher quality learned latent representations and better test accuracy than the original \mcr2 objective.
\end{enumerate}

Finally, we note that our approach for optimization with variational forms may be of independent interest for other models which require computing $\log \det$ terms, such as in system identification \cite{log-det} or {normalizing flow} models \cite{kobyzev2020normalizing}.

\section{Preliminaries}

Here we first describe the original \mcr2 formulation as well as introduce relevant background material.

\subsection{MCR$^2$ Objective}
The original MCR$^2$ objective \cite{mcr} takes the following form: Given $m$ training samples $\mb{X} = [\mb{X}_1, \ldots, \mb{X}_m] \in \R^{D \times m}$ belonging to $k$ classes\footnote{Here we adopt the notation that an upper case letter $\mb X$ represents a matrix and $\mb X_i$ denotes the $i^\text{th}$ column of a matrix.}, let $\mb Z_\theta = [f_\theta(\mb{X}_1),...,f_\theta(\mb{X}_m)] \in \R^{d \times m}$ be the latent representation where recall $f_\theta$ is the featurizer parameterized by $\theta$, and let $\mb \Pi \in \Re^{m \times k}$ define the class membership, where $\mb \Pi_{i,j}$ denotes the probability\footnote{Note that if the labels are known exactly then the entries of $\mb \Pi$ are binary $\{0,1\}$ with each row of $\mb \Pi$ summing to one. Notice that our notation of $\mb \Pi$ is slightly different from that adopted in \cite{mcr}. Our choice is more compact for optimization purposes.} that $\mb X_i$ is in class $j$. 
Then, $\text{MCR}^2$ aims to learn a feature representation $\mb Z_\theta$ that maximizes the following \emph{coding rate reduction} $\Delta R(\mb Z_\theta)$:
\begin{equation}
{\small
\label{eq:mcr}
\begin{aligned}
\max_\theta  \Delta  R(\mb Z_\theta) &\equiv  R(\mb Z_\theta) -  R_{c}(\mb Z_\theta, \mb \Pi) \quad  \text{s.t. } \mb Z_\theta \in \mc S, \quad \text{where} \\
R(\mb Z_\theta) &= \frac{1}{2}\log\det\left(\mb I + \alpha \mb Z_\theta \mb Z_\theta^\top\right), \quad \text{and} \\
R_{c}(\mb Z_\theta, \mb \Pi) &= \sum_{j=1}^k \frac{\gamma_j}{2}\log\det\left(\mb I + \alpha_j \mb Z_\theta \Diag(\mb \Pi_j) \mb Z_\theta^\top\right) 
\end{aligned}
}
\end{equation}
where $\mb \Pi_j$ denotes the $j^\text{th}$ column of $\mb \Pi$, $\Diag(\mb \Pi_j)$ denotes a diagonal matrix with $\mb \Pi_j$ along the diagonal, $\alpha = d/(m\epsilon^2)$, $\alpha_j = d/(\langle \mb 1, \mb \Pi_j \rangle \epsilon^2)$, $\gamma_j = \langle \mb 1, \mb \Pi_j \rangle /m$, $\mc S$ is the set of all matrices whose columns all have unit $\ell_2$ norm\footnote{Note the constraint that $\mb Z_\theta$ has unit norm columns is often achieved by simply having the final operation of the network $f_\theta$ be a normalization.} and $\epsilon > 0$ is a prescribed precision error. Roughly speaking, $R(\mb Z_\theta)$, known as the \textit{expansion} term, captures the dimension (or the volume) of the space spanned by $\mb Z_\theta$ while $R_c(\mb Z_\theta, \mb \Pi)$, or the  \textit{compression} term, measures the sum of the dimensions/volumes of the data from each class. From an information-theoretic point of view, $R(\mb Z_\theta)$ estimates the coding rate, or the number of binary bits required to encode $\mb Z_\theta$, through $\epsilon$-ball packing \cite{ma2007segmentation}.  The terms are called expansion and compression terms respectively, since by maximizing $\Delta R$, the first coding rate term is maximized, which seeks to expand the overall volume of the embedded features, while the second coding rate term is minimized, which seeks to compress the volume of the embedded features from each class. 

By assessing the MCR$^2$ objective \eqref{eq:mcr}, one can  already observe a potential drawback of \mcr2 for optimization. In particular, note that each $\log \det$ term requires $O (\min\{d^3, m^3\})$ operations to compute (and similarly to back-propagate through). While $d \ll D$ can often be made reasonably small for many high-dimensional datasets which have an underlying low-dimensional structure,  $R_c(\mb Z_\theta, \mb \Pi)$ in particular is still often expensive to compute because it requires $k$ computations of $\log \det$. This severely limits the application of \mcr2 for datasets with even moderate numbers, say hundreds, of classes as the objective becomes computationally infeasible on common machines.

\subsection{Variational Forms of Spectral Functions}

To avoid this computational bottleneck, here we propose instead a formulation which takes advantage of variational forms of spectral functions of a matrix.
Specifically, for a given positive semi-definite (PSD)  matrix $\mb M$ and any scalar $c \geq 0$, note that 
\begin{equation}
\label{logprop}
    \log \det(\mb I + c \mb M) = \sum_{i=1}^r \log(1 + c\sigma_i (\mb M)),
\end{equation} where $r$ denotes the rank of $\mb M$ and $\sigma_i (\mb M)$ is the $i^{th}$ singular value of $\mb M$. Here note that $\log(1 + c\sigma)$ is a non-decreasing, concave function of $\sigma$, so we can exploit known variational forms of spectral functions\cite{giampouras2020novel, ornhag2020bilinear}. In particular note the following result in \cite{ornhag2020bilinear}. 
\begin{theorem}[Adapted from \cite{ornhag2020bilinear}]
\label{var-theo}
For any matrix $\mb X$, let r denote the rank of $\mb X$, let $\sigma_i(X)$ denote the $i^{th}$ singular value of $\mb X$, and define $$ H(\mb X) = \sum_{i=1}^r h(\sigma_i(\mb X)).$$ for some function $h$. If $h$ is a concave, non-decreasing function on $[0, \infty)$ with $h(0) = 0$, then the following holds

$$ H(\mb X) = \min_{\mb U, \mb V : \mb U \mb V^\top = \mb X} \sum_{i} h\big(\norm{\mb U_i}{2} \norm{\mb V_i}{2}\big),$$ where $(\mb U_i, \mb V_i)$ denotes the $i^{th}$ columns of $(\mb U, \mb V)$.  Note also that $(\mb U, \mb V)$ can have an arbitrary number of columns $(\geq r)$ provided $\mb U \mb V^\top = \mb X$.
\end{theorem}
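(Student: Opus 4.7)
The plan is to prove the equality by verifying both inequalities: first, that an explicit factorization attains the value $H(\mb X)$, and second, that every feasible factorization $\mb U \mb V^\top = \mb X$ yields a sum at least $H(\mb X)$.

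For the attainment direction, I would use the compact SVD $\mb X = \mb U_X \Sigma_X \mb V_X^\top$ with $\Sigma_X = \mathrm{diag}(\sigma_1(\mb X), \ldots, \sigma_r(\mb X))$, and split the singular values evenly by setting $\mb U = \mb U_X \Sigma_X^{1/2}$ and $\mb V = \mb V_X \Sigma_X^{1/2}$. A direct check gives $\mb U \mb V^\top = \mb X$ and $\norm{\mb U_i}{2}\norm{\mb V_i}{2} = \sigma_i(\mb X)$ for each column $i$, so $\sum_i h(\norm{\mb U_i}{2}\norm{\mb V_i}{2}) = H(\mb X)$, establishing that the minimum is at most $H(\mb X)$.

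For the lower bound, the key observation is that any admissible pair $(\mb U, \mb V)$ yields a rank-one decomposition $\mb X = \sum_j \mb U_j \mb V_j^\top$, where each rank-one summand has a single nonzero singular value equal to $\norm{\mb U_j}{2}\norm{\mb V_j}{2}$; in particular $H(\mb U_j \mb V_j^\top) = h(\norm{\mb U_j}{2}\norm{\mb V_j}{2})$. Hence the lower bound reduces to the subadditivity of $H$ under matrix addition,
\[
H(\mb A + \mb B) \leq H(\mb A) + H(\mb B),
\]
which, applied inductively across the rank-one pieces, yields $H(\mb X) \leq \sum_j h(\norm{\mb U_j}{2}\norm{\mb V_j}{2})$.

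I expect this subadditivity step to be the main obstacle. The naive strategy of chaining the Ky Fan weak-majorization inequality $\sigma(\mb A + \mb B) \prec_w \sigma(\mb A) + \sigma(\mb B)$ (componentwise sum on the right) with the pointwise subadditivity $h(a+b) \leq h(a) + h(b)$ (which follows from concavity together with $h(0) = 0$) fails, because $\sum h(\cdot)$ is not monotone along weak majorization when $h$ is concave. Instead I would invoke the Rotfel'd inequality from matrix analysis, which directly asserts $\sum_i h(\sigma_i(\mb A + \mb B)) \leq \sum_i h(\sigma_i(\mb A)) + \sum_i h(\sigma_i(\mb B))$ for any concave non-decreasing $h\colon[0,\infty)\to[0,\infty)$ with $h(0) = 0$. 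Its proof proceeds via a Thompson-type singular-value pinching argument applied to the SVD of $\mb A + \mb B$, circumventing the failure of weak-majorization monotonicity. Combining Rotfel'd's inequality with the rank-one decomposition then yields the desired lower bound and completes the proof.
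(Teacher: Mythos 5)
The paper does not actually prove this theorem --- it is imported wholesale from \cite{ornhag2020bilinear} and used as a black box (only Proposition 1 is proved in the paper) --- so there is no in-paper argument to compare against. Judged on its own, your proof is correct and follows the standard route used in the cited literature. The attainment direction via $\mb U = \mb U_X\Sigma_X^{1/2}$, $\mb V = \mb V_X\Sigma_X^{1/2}$ is exactly right (and is the same factorization the paper later exploits in Proposition 1 when it specializes to $\mb U = \mb V$ for PSD $\mb M$). For the lower bound, you correctly identify the crux: writing $\mb X = \sum_j \mb U_j\mb V_j^\top$ reduces everything to subadditivity of $\mb X \mapsto \sum_i h(\sigma_i(\mb X))$ under matrix addition, and the naive chain ``Ky Fan weak majorization plus pointwise subadditivity of $h$'' genuinely does fail, since $\sum_i h(\cdot)$ applied to a concave increasing $h$ is not isotone along weak submajorization. (Indeed, the tempting stronger claim $\sigma(\mb U\mb V^\top)\prec_w (\|\mb U_j\|_2\|\mb V_j\|_2)_j$ is false: take $\mb U_1=\mb V_1=\mb U_2=\mb V_2=e_1$, so $\sigma_1 = 2 > 1$.) The Rotfel'd--Thompson inequality $\sum_i h(\sigma_i(\mb A+\mb B))\le \sum_i h(\sigma_i(\mb A))+\sum_i h(\sigma_i(\mb B))$ for concave nondecreasing $h$ with $h(0)=0$ is a genuine theorem (Thompson's $|\mb A+\mb B|\preceq \mb U|\mb A|\mb U^* + \mb V|\mb B|\mb V^*$ combined with Rotfel'd for PSD matrices), and invoking it by name with a sketch of its mechanism is an acceptable level of rigor for a result the paper itself only cites. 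Two small housekeeping points you handle implicitly but should state: zero columns contribute $h(0)=0$ to both sides, and padding $H$ with zero singular values is harmless, which is why the number of columns may exceed $r$.
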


\section{Proposed Formulation}

Having introduced the above background material, we now describe our proposed approach.  In particular, note that Theorem \ref{var-theo} immediately gives the following result as a proposition.
\begin{proposition}
\label{var-cor}
Let $\mb M $ be any real positive semi-definite matrix and let $c \geq 0$ be any non-negative scalar. Then the following holds:
\begin{equation}
\label{var-eq}
{
-\log \det(\mb I + c \mb M) = 
\max_{\substack{\mb U : \ \mb U \mb U^\top = \mb M}} \!\! - \!\! \sum_{i} \log\big(1 + c\norm{\mb U_i}{2}^2\big).
}
\end{equation}
Further, if $\bar{\mb U} \mb S \bar{\mb U}^\top = \mb M$ is a SVD of $\mb M$ then $\mb U^* = \bar{\mb U}\mb S^{1/2}$ is a solution to the above problem.
\end{proposition}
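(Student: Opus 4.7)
The plan is to obtain the proposition as an immediate corollary of Theorem~\ref{var-theo}, specialized to the PSD setting. Define $h(t) = \log(1+ct)$; since $c \geq 0$, this function is non-decreasing, concave on $[0,\infty)$, and satisfies $h(0)=0$, so the hypotheses of Theorem~\ref{var-theo} are met. Applying the theorem to $\mb M$ itself and combining with identity~\eqref{logprop}, which gives $\log\det(\mb I + c\mb M) = \sum_i \log(1 + c\sigma_i(\mb M)) = H(\mb M)$, yields the variational identity
\[
\log\det(\mb I + c\mb M) = \min_{\mb U,\mb V:\, \mb U\mb V^\top = \mb M} \sum_i \log\bigl(1 + c\,\|\mb U_i\|_2\,\|\mb V_i\|_2\bigr).
\]

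The next step is to reduce the general factorization $\mb U\mb V^\top = \mb M$ to a symmetric factorization $\mb U\mb U^\top = \mb M$, which exists precisely because $\mb M$ is PSD. Since restricting a minimization to a subset can only raise its value,
\[
\log\det(\mb I + c\mb M) \leq \min_{\mb U :\, \mb U\mb U^\top = \mb M} \sum_i \log\bigl(1 + c\,\|\mb U_i\|_2^2\bigr).
\]
To establish the reverse inequality and simultaneously certify the claimed optimizer, I would exhibit the SVD-based candidate $\mb U^\star = \bar{\mb U}\mb S^{1/2}$. Feasibility is immediate: $\mb U^\star(\mb U^\star)^\top = \bar{\mb U}\mb S\bar{\mb U}^\top = \mb M$. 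Because the columns of $\bar{\mb U}$ are orthonormal and $\mb S^{1/2}$ is diagonal, one has $\|\mb U^\star_i\|_2^2 = \sigma_i(\mb M)$, and therefore
\[
\sum_i \log\bigl(1 + c\,\|\mb U^\star_i\|_2^2\bigr) = \sum_i \log(1 + c\sigma_i(\mb M)) = \log\det(\mb I + c\mb M).
\]
Combined with the inequality above, this shows that equality holds and that $\mb U^\star$ attains the minimum. Negating both sides converts the $\min$ to the $\max$ displayed in~\eqref{var-eq}.

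The steps are essentially routine once Theorem~\ref{var-theo} is available; the only subtle point is that the theorem minimizes over asymmetric factorizations $\mb U\mb V^\top = \mb M$, whereas the proposition restricts to the symmetric case $\mb U\mb U^\top = \mb M$. The SVD construction bridges this gap by certifying that the restricted minimum already equals the unrestricted one, so no value is lost by requiring a PSD factorization.
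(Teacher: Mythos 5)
Your proof is correct and follows essentially the same route as the paper's: invoke Theorem~\ref{var-theo} with $h(t)=\log(1+ct)$ together with~\eqref{logprop}, observe that restricting to symmetric factorizations $\mb U = \mb V$ can only move the optimum in one direction, and certify attainment with the SVD candidate $\mb U^\star = \bar{\mb U}\mb S^{1/2}$. The only cosmetic difference is that you work with the minimization form and negate at the end, whereas the paper negates first and argues with the maximization form throughout.
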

\begin{proof} First recall the basic fact that for any function $\psi(x)$ one has $-\min_x \psi(x) = \max_x -\psi(x)$. Additionally, recall \eqref{logprop} and note that the function $h(x) = \log(1 + c x)$ satisfies the conditions required for $h$ in Theorem \ref{var-theo}.  These facts give the following:
\begin{equation}
\label{eq:neg_var_uv}
{
\begin{aligned}
-&\log \det(\mb I + c \mb M) = \\ 
&\max_{\substack{\mb U, \mb V : \mb U \mb V^\top = \mb M}} \!\! - \!\! \sum_{i} \log(1 + c\norm{\mb U_i}{2}\norm{\mb V_i}{2}).
\end{aligned}
}
\end{equation}
Further, note \eqref{eq:neg_var_uv} implies that $-\log \det (\mb I + c \mb M) \geq - \sum_i \log (1 + c \norm{ \mb U_i}{2}^2)$ for all $\mb U$ such that $\mb U \mb U^\top = \mb M$ since we have simply added the constraint $\mb U = \mb V$.  The result is completed by noting that for the choice of $\mb U = \mb U^*$ the maximum can be attained since $\log(1 + c \norm{\mb U^*_i}{2}^2) = \log(1+ c \sigma_i(\mb M)), \forall i \in [r]$.
 \end{proof}

\subsection{Variational Formulation of MCR$^2$}
Using Proposition \ref{var-cor}, we develop our formulation by replacing the $-\log \det$ terms in $R_c$ of \eqref{eq:mcr} by the above variational form.  In particular, note that for each class $j$ we can eliminate the associated $-\log \det$ term in $R_c$ of \eqref{eq:mcr} by introducing an additional matrix $\mb U^{(j)}$ subject to the constraint that $\mb Z_\theta \mathrm{Diag}(\mb \Pi_j) \mb Z_\theta^\top = \mb U^{(j)} (\mb U^{(j)})^\top$.  Further, due to the fact that each row of $\mb \Pi$ sums to one, we also have $\sum_{j=1}^k \mb Z_\theta \Diag(\mb \Pi_j) \mb Z_\theta^T = \mb Z_\theta \mb Z_\theta^\top$.  As such, the variational form in Proposition \ref{var-cor} gives that the original \mcr2 objective in \eqref{eq:mcr} is equivalent to the following constrained variational form:
%
%
\begin{equation}
\label{eq:U_var}
{
\begin{aligned}
    &\max_\theta \Delta R(\mb Z_\theta) = \\ 
    &\max_{\theta, \{\mb U^{(j)}\}_{j=1}^k} \frac{1}{2} \log \det \left(\mb I +\alpha \sum_{j=1}^k \mb U^{(j)}(\mb U^{(j)})^\top \right)  \\ 
    &\quad \quad \quad \quad - \sum_{j=1}^k \frac{\gamma_j}{2} \sum_i \log \left(1 + \alpha_j \norm{\mb U_i^{(j)}}{2}^2\right) \\
    & \quad \text{s.t. }\forall j, \mb U^{(j)} (\mb U^{(j)})^\top = \mb Z_\theta \textrm{Diag}(\mb \Pi_j) \mb Z_\theta^\top \ \text{and} \ \mb Z_\theta \in \mc S.
\end{aligned}
}
\end{equation}
From this form, we now reparameterize the $\mb U^{(j)}$ matrices as $\mb U^{(j)} = \Gamma \Diag(\mb A_j)^{1/2}$  where $\Gamma \in \Re^{d \times q} \cap \mc S$ is a dictionary with unit norm columns, and $\mb A_j \in \Re_+^q$ is a (non-negative) encoding vector.  Now let $\mb A \in \Re_+^{q \times k} = [\mb A_1, \ldots , \mb A_k]$ be a matrix of the concatenated encoding vectors and note that we trivially have $\Gamma \Diag(\mb A_j) \Gamma^\top = \mb U^{(j)} (\mb U^{j})^\top$ and $\| \mb U^{(j)}_i \|_2^2 = \mb A_{i,j}$, which gives another equivalent formulation for the \mcr2 objective, provided the number of dictionary elements $q$ is sufficiently large so that each optimal $\mb U^{(j)}$ matrix in \eqref{eq:U_var} can be encoded by $\Gamma$ (i.e., each column of $\mb U^{(j)}$ must be a column of $\Gamma$ within a scaling factor):
\begin{equation}
{
\begin{aligned}
    &\max_\theta \Delta R(\mb Z_\theta) = \\ 
 & \max_{\theta, \Gamma \in \R^{d \times q} \cap \mc S,  \mb A \in \R_+^{q \times k}} \frac{1}{2} \log \det \left(\mb I + \alpha \sum_{j=1}^k \Gamma \mathrm{Diag}(\mb A_j) \Gamma^\top \right) \\  
  &\quad \quad \quad - \sum_{j=1}^k \frac{\gamma_j}{2} \sum_{l=1}^q \log \left(1 + \alpha_j \mb A_{l, j}\right) \\
 &\text{s.t. } \forall j,  \Gamma \Diag(\mb A_j) \Gamma^\top = \mb Z_\theta \textrm{Diag}(\mb \Pi_j) \mb Z_\theta^\top\ \text{and} \ \mb Z_\theta \in \mc S.
\end{aligned}
}
\label{eqn:variational-constrained}
\end{equation}
Finally, we relax the strict equality constraints $\Gamma \Diag(\mb A_j) \Gamma^\top = \mb Z_\theta \textrm{Diag}(\mb \Pi_j) \mb Z_\theta^\top$ with $\ell_2$ penalties $\frac{1}{\gamma_j} \|\mb Z_\theta \mathrm{Diag}(\mb \Pi_j) \mb Z_\theta^\top - \Gamma \mathrm{Diag}(\mb A_j) \Gamma^\top\|_F^2$ to arrive at our final proposed formulation, which we call V-MCR$^2$:
%
%
\begin{equation}
{\small
\begin{aligned}
 & \max_{\theta, \Gamma \in \R^{d \times q} \cap \mc S,  \mb A \in \R_+^{q \times k}} R^v(\Gamma, \mb A) - R^v_c(\mb A) - \frac{\mu}{2m} M(\mb Z_\theta, \Gamma, \mb A) \\
  &\text{ where} \quad  R^v(\Gamma, \mb A) = \frac{1}{2} \log \det \left(\mb I + \alpha \sum_{j=1}^k \Gamma \mathrm{Diag}(\mb A_j) \Gamma^\top \right), \\  
 & R^v_c(\mb A) = \sum_{j=1}^k \frac{\gamma_j}{2} \sum_{l=1}^q \log \left(1 + \alpha_j \mb A_{l, j}\right), \\
 & M(\mb Z_\theta, \Gamma, \mb A) = \sum_{j=1}^k \frac{1}{\gamma_j} \norm{\mb Z_\theta \mathrm{Diag}(\mb \Pi_j) \mb Z_\theta^\top \!-\! \Gamma \mathrm{Diag}(\mb A_j) \Gamma^\top}{F}^2,  \\ 
\end{aligned}
}
\label{eqn:variational-objective}
\end{equation}
such that $\mb Z_\theta \in \mc S$. Regularization parameter $\mu > 0$ weights how strictly the equality constraints should be approximated, and the $\frac{1}{\gamma_j}$ terms roughly ensure class balance (recall, $\gamma_j = \langle \mb 1, \mb \Pi_j \rangle /m$).  From this reformulation, we have significantly reduced the complexity of evaluating the objective function. The $\log \det$ terms in $R_c$ which take $O(k \min \{d^3, m^3\})$ time to evaluate is now replaced by $O(qk)$ for the sum of the $\log(1 + \alpha_j \mb A_{l,j})$ terms along with the cost of computing the $M$ term which scales as $O(kd^2)$.  

\subsection{Interpretation of the Variational Form}
Besides the above computational advantages, we also discuss a few additional aspects of our formulation below.

\myparagraph{Sparsifying dictionary learning interpretation.} Notice that the above variational reformulation takes on a natural interpretation as learning a sparsifying dictionary: it essentially ``parameterizes'' the subspaces spanned by each class with a common shared dictionary $\Gamma$. Every class then selects a ``sparse'' number of eigenbases, with $\mb A_j$,  from this dictionary and forms its estimate of the sample covariance within the subspace. Notice that the scalar $\log$ terms in $ R^v_c(\mb A)$ are precisely nonconvex sparsity promoting measures adopted in early studies of sparse representation \cite{olshausen1996emergence,olshausen1997sparse}. The sparsity in the (spectral) bases in terms of $\mb A_j$ precisely corresponds to the subspace spanned by each class being low-dimensional or low-rank.  

\myparagraph{Penalty function method and other options.} Notice that in our formulation \eqref{eqn:variational-objective}, the equality constraint in \eqref{eqn:variational-constrained} is enforced through a penalty function $M$. As the penalty weight $\mu$ increases to infinity and the dictionary $\Gamma$ is sufficiently large\footnote{In the worst case the model become equivalent when $\Gamma$ is large enough to contain a concatenation of the singular vectors of each class $\mb Z_\theta \Diag (\mb \Pi_j) \mb Z_\theta^\top$.}, the formulation becomes exactly equivalent to the original formulations \eqref{eqn:variational-constrained} and \eqref{eq:mcr}. Of course, to deal with the equality constraint in \eqref{eqn:variational-constrained} more precisely, one may also consider adopting more advanced methods such as the augmented Lagrangian multiplier method to incorporate the equality constraint \cite{hestenes1969multiplier,rockafellar1973multiplier}, which we leave for future work. However, as we discuss next, by relaxing the strict equality constraint we also gain a potential advantage when the latent representation contains  noise.  

\myparagraph{Low-rank LASSO interpretation.} Notice that the sparse/low-rank promoting term $ R^v_c(\mb A)$ and the quadratic penalty term $M$ together resemble the classic LASSO method  for recovering a sparse solution from noisy measurements \cite{TibshiraniR1996}. The only difference here is that we are seeking a sparse solution in the spectrum of a covariance matrix -- hence seeking a low-rank solution for the covariance. So to some extent, one may consider the variational form as a ``low-rank LASSO.'' We have noticed a nice side benefit of this LASSO-type formulation: empirically it seems to lead to better solutions than solving the original MCR$^2$ objective (see experimental results in Section \ref{sec:experiment-results}). Part of the reason is likely because the LASSO type relaxation introduced by the variational form finds a solution that is more stable to small noise in the data or deviation from an ideal low-dimensional linear subspace.




\subsection{Optimization Strategy}
\subsubsection{Alternating Maximization}
To optimize \eqref{eqn:variational-objective}, we adopt an alternating maximization strategy \cite{Eckstein2012} between the variation parameters $(\Gamma, \mb A)$ and the network parameters $(\theta)$. At each iteration, we first optimize $\Gamma$ and $\mb A$ by taking one step of a proximal gradient ascent update which consists of taking a gradient ascent step on the relevant part of variational loss $\Delta R^v - \frac{\mu}{2m} M(\mb Z_\theta, \Gamma, \mb A)$, followed by normalizing the columns of $\Gamma$ to have unit $\ell_2$ norm and thresholding the negative entries of the updated $\mb A$ matrix to $0$ (i.e., applying the ReLU function). 
To ensure stability of the gradient-based method, we inversely scale the learning rate of $\Gamma$ and $\mb A$ by upper-bounds of the Lipschitz constants of the gradients, $\frac{1}{L_\Gamma}$ and $\frac{1}{L_{\mb A}}$, respectively. (See Appendix for our derivation for the bounds.)  
Next, the matrix approximation term $M(\mb Z_\theta, \Gamma, \mb A)$ is recomputed using the updated $\Gamma$ and $\mb A$, and we then update the network parameters, $\theta$, by taking a gradient step on the relevant part of the variation loss, $\nabla_\theta M(\mb Z_\theta, \Gamma, \mb A)$. 

In addition, note that from the variational form in Proposition \ref{var-cor} we know that the optimal variational parameters $(\Gamma, \mb A)$ should be closely related to the singular values and vectors of $\mb Z_\theta  \Diag(\mb{\Pi}_j) \mb Z_\theta^\top$ for each class $j$ (the relationship becomes exact for large values of $\mu$).  We exploit this fact to initialize the variational parameters and to make periodic `approximately closed-form' updates to the variational parameters.
We call this procedure $\textit{latching}$, which we describe in more detail in the next Section \ref{sec:latching}. 
We summarize our overall training process in Algorithm \ref{alg}.\footnote{Note that for clarity we describe the full procedure for gradient ascent, but in our experiments stochastic gradient ascent is implemented.}

\begin{algorithm}
\caption{Variational MCR$^2$ Training}
\begin{algorithmic}[1]
\State {\textbf{Input:} data $\mb X$, labels $\mb Y$, featurizer $f_\theta(\cdot)$}, 
\texttt{latch-freq}, step sizes ($\nu_\theta$, $\nu_\Gamma$, $\nu_{\mb A}$)
\State Initialize $\mb A, \Gamma \leftarrow \mathrm{latching}(\mb X, \mb Y, f_\theta)$
\For{$\texttt{iter} = 0, 1, ..., n-1$}
\State Get $\mb Z_\theta = f_\theta(\mb X)$ and membership matrices $\mb \Pi$
\State Get $\ell_{\text{V-MCR}^2}(\mb Z_\theta, \Gamma, \mb A) =$ \eqref{eqn:variational-objective} 
\State Compute $L_{\mb A}, L_\Gamma$ (see Appendix)
\State $\Gamma \leftarrow \Gamma + \frac{\nu_{\Gamma}}{L_\Gamma} \nabla_\Gamma \ell_{\text{V-MCR}^2}(\mb Z_\theta, \Gamma, \mb A)$
\vspace{1.5pt}
\State $\mb A \leftarrow \mb A + \frac{\nu_{\mb A}}{L_{\mb A}} \nabla_{\mb A} \ell_{\text{V-MCR}^2}(\mb Z_\theta, \Gamma, \mb A)$
\State Project $\mb A \leftarrow \mathrm{ReLU}(\mb A)$
\State Project $\Gamma_l \leftarrow \frac{1}{\norm{\Gamma_l}{2}}\Gamma_l \quad \forall l \in [q]$
\State Recompute $M(\mb Z_\theta, \Gamma, \mb A)$
\State $\theta \leftarrow \theta - \nu_\theta \nabla_\theta(M(\mb Z_\theta, \Gamma, \mb A))$
\If {$\texttt{iter}\,\mbox{mod}\, \texttt{latch-freq} = 0$}
\State $\mb A, \Gamma \leftarrow  \mathrm{latching}(\mb X, \mb Y, f_\theta)$
\EndIf
\EndFor

\State \Return $f_\theta$
\end{algorithmic}\label{alg}
\end{algorithm}

\begin{algorithm}
\caption{Latching}
\begin{algorithmic}
\State {\textbf{Input:} data $\mb X$, labels $\mb Y$, featurizer $f_\theta(\cdot)$}
\State Get $\mb Z_\theta = f_\theta(\mb X) \in \R^{d \times m}$ and membership $\mb \Pi \in \Re^{m \times k}$
\State $\mb A \leftarrow \mb 0 \in  \R^{q \times k} $ (assume $q$ is divisible by $k$)
\State $\Gamma  \leftarrow \mb 0 \in \R^{d \times q}$ 
\For {$j = 1, ..., k$}
\State Get $\mb U \mathrm{Diag}(\mb \sigma) \mb V^\top = \mathrm{SVD}(\mb Z_\theta \Diag(\mb \Pi_{j}) \mb Z_\theta^\top)$
\State $s \leftarrow q/k$  
\State $\Gamma[:, (j-1)*s : j*s] = \mb U[:, 0:s]$ \% python indexing
\State $\mb A[(j-1)*s : j*s, j] = \mb \sigma[0:s]$ \% python indexing
\EndFor
\State \Return $\mb A, \Gamma$
%
%
\end{algorithmic}\label{latch-alg}
\end{algorithm}


\subsubsection{Latching} 
\label{sec:latching}
In order to optimize the variational MCR$^2$ objective, the dictionary $\Gamma$ and $\mb A$ must maximize $\Delta R(\mb Z_\theta) = R^v(\Gamma, \mb A) - R^v_c(\mb A)$ whilst minimizing the $\ell_2$ regularization term $M(\mb Z_\theta, \Gamma, \mb A)$. This trade-off is controlled by the regularization constant $\mu$. In practice, when $\mu$ is too large, each gradient step does not allow for $\Gamma\mathrm{Diag}(\mb A_j) \Gamma^\top$ to stray too far away from $\mb Z_\theta \Diag(\mb \Pi_j) \mb Z^\top$, which can result in slow convergence. 
We observe that the following procedure improves convergence in practice. Note that the variational form is maximized\footnote{The maximization is exact as $\mu$ becomes large, but a good approximation otherwise.} when $\Gamma$ and $\mb A$ are derived from the SVDs of $\mb Z_\theta \Diag(\mb \Pi_j) \mb Z_\theta^\top$, as given in Proposition~\ref{var-cor}.  This gives a means to periodically reinitialize the variational parameters $(\Gamma, \mb A)$ based on the SVDs of $\mb Z_\theta \Diag(\mb \Pi_j) \mb Z_\theta^\top$, which we refer to as \textit{latching} as described in detail in 
Algorithm \ref{latch-alg}. 
This latching step can be viewed as taking an (approximate) full-maximization step w.r.t. the variational parameters (as opposed to a proximal gradient descent step) based on the closed-form solution provided in Proposition~\ref{var-cor}. This will be an exact maximization step as $\mu$ becomes large.
In short, given a dictionary with $q$ columns, we initialize the dictionary as the concatenation of the top $q/k$ singular vectors of  $\mb Z_\theta \Diag(\mb \Pi_j) \mb Z_\theta^\top$ for each class $j \in [k]$. Similarly, the columns of $\mb A$ are initialized as the corresponding singular values. 
Though latching is in itself an expensive procedure, requiring one to compute the SVD of a matrix $k$ times, it is optional (though we notice a benefit in practice) and can be done relatively infrequently throughout training with a proper choice of the hyper-parameter \texttt{latch-freq}. As a result, the amortized cost of latching becomes  insignificant.

\section{Experimental Setup}
\label{sec:experiment-setup}
We compare $\Delta R$, wall-clock time, and accuracy of models trained with the original MCR$^2$ objective and the variational MCR$^2$ objective on MNIST \cite{lecun1998gradient}, CIFAR-10 \cite{krizhevsky2009learning}, CIFAR-100 \cite{krizhevsky2009learning}, and Tiny ImageNet \cite{le2015tiny} (with 200 classes) datasets. We also compare the performance to cross-entropy (CE) training as a benchmark of the correctness of the learned representations (for classification). The high-level goal of these experiments is to show that 1) the variational MCR$^2$ objective is feasible for datasets where the original MCR$^2$ objective is computationally expensive (or impossible) to train (such as CIFAR-100 and Tiny ImageNet), 2) show that training on the variational MCR$^2$ maximizes the true $\Delta R$ objective and obtains the desired  subspace-like representations.





\subsection{Hyperparameters}
\label{sec:hyperparam}
For fair comparison across training objectives (original MCR$^2$, variational MCR$^2$), we use a learning rate of $10^{-3}$ for the network optimizer and the same batch size. For CE, we use the same batch size, but a larger learning rate of $10^{-2}$. The batch size is 1000 for MNIST and CIFAR-10, and 2000 for CIFAR-100 and Tiny ImageNet. The network is optimized using stochastic gradient descent for all objectives. For the precision error $\epsilon$ of the MCR$^2$ objectives, we use $\epsilon^2 = 0.5$ for all datasets. $\epsilon$ and batch size are consistent with the experimental settings in the original MCR$^2$ work \cite{mcr} for MNIST and CIFAR-10.  For  Variational MCR$^2$, the regularization constant $\mu=1$ and initial learning rates $\nu_\Gamma = 5$, $\nu_{\mb A} = 5$ across all experiments. We perform latching every 50 epochs for all experiments. See Appendix for precise details. The dictionary size $q$ and feature dimension $d$ varies across the datasets. For MNIST and CIFAR-10 we use $d = 128$ and $q = 20 \cdot k$, and for CIFAR-100 and Tiny ImageNet we use $d = 500$ and $q = 10 \cdot k$. 

\subsection{Nearest Subspace Classifier}
\label{sec:nearsub}
MCR$^2$ is a loss over the featurizer $f_\theta$. To  classify the test data, we use the nearest-subspace classifier similar to the original $\text{MCR}^2$ work \cite{mcr}. As shown by \cite{mcr}, at the global optima of MCR$^2$, the representations of each class lie on low-dimensional subspaces that are orthogonal to each other. Yu, et al. \cite{mcr} also empirically observe this property for networks trained by SGD. Assuming that the learned representations satisfy this property, given a test datapoint, we can simply identify the closest subspace for the final classification. Formally, given a test sample $\mb z_{test} = f_\theta(\mb x_{test})$, the predicted label is given as
\begin{equation}
    y = \argmin_{j \in {1,...,k}}\normsq{(\mb I - \mb V^{(j)} (\mb V^{(j)})^\top)\mb z_{test}}{2}{2},
\end{equation}
\noindent where $\mb V^{(j)}$ is a matrix of the top $\lfloor{\frac{d}{k}}\rfloor$ principal components of $\mb Z_\theta \Diag(\mb \Pi_j) \mb Z_\theta^\top$ with $\mb Z_\theta = f_\theta(\mb X)$ being the embedding of the training data $\mb X$.

\section{Experimental Results}
\label{sec:experiment-results}

We discuss the performance of variational MCR$^2$ below. Performance is measured by 1) training speed, 2) the true $\Delta R$ value the model reaches over the training data, and 3) the classification accuracy over the test set. Note that we are more interested in comparing the accuracy and training efficiency between MCR$^2$ and V-MCR$^2$, over trying to reach state of the art results in this paper.

\begin{figure*}[t]
\begin{multicols}{2}
    \centering
    \begin{subfigure}{0.48\textwidth}
      \centering
      \includegraphics[width=\linewidth]{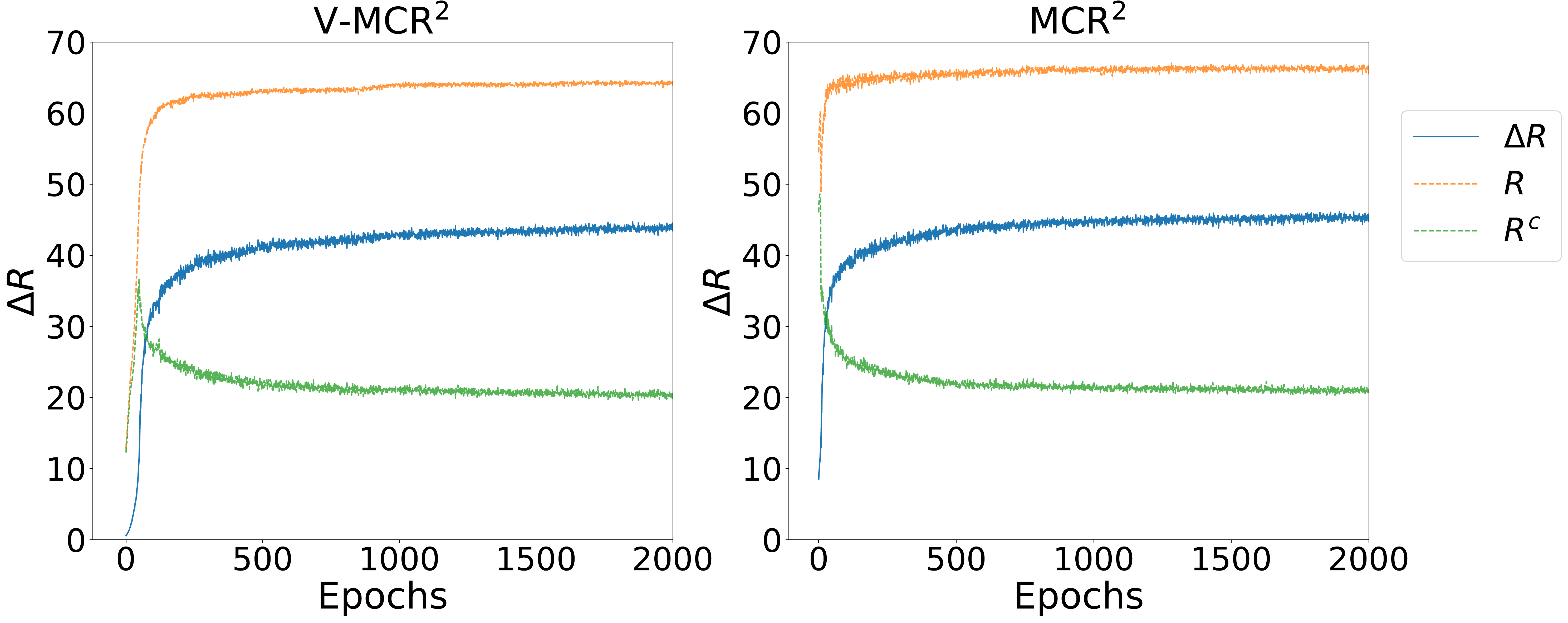}
      \caption{\textbf{MNIST}}
      \label{fig:mnist_loss}
    \end{subfigure} 
    \begin{subfigure}{0.48\textwidth}
      \centering
      \includegraphics[width=\linewidth]{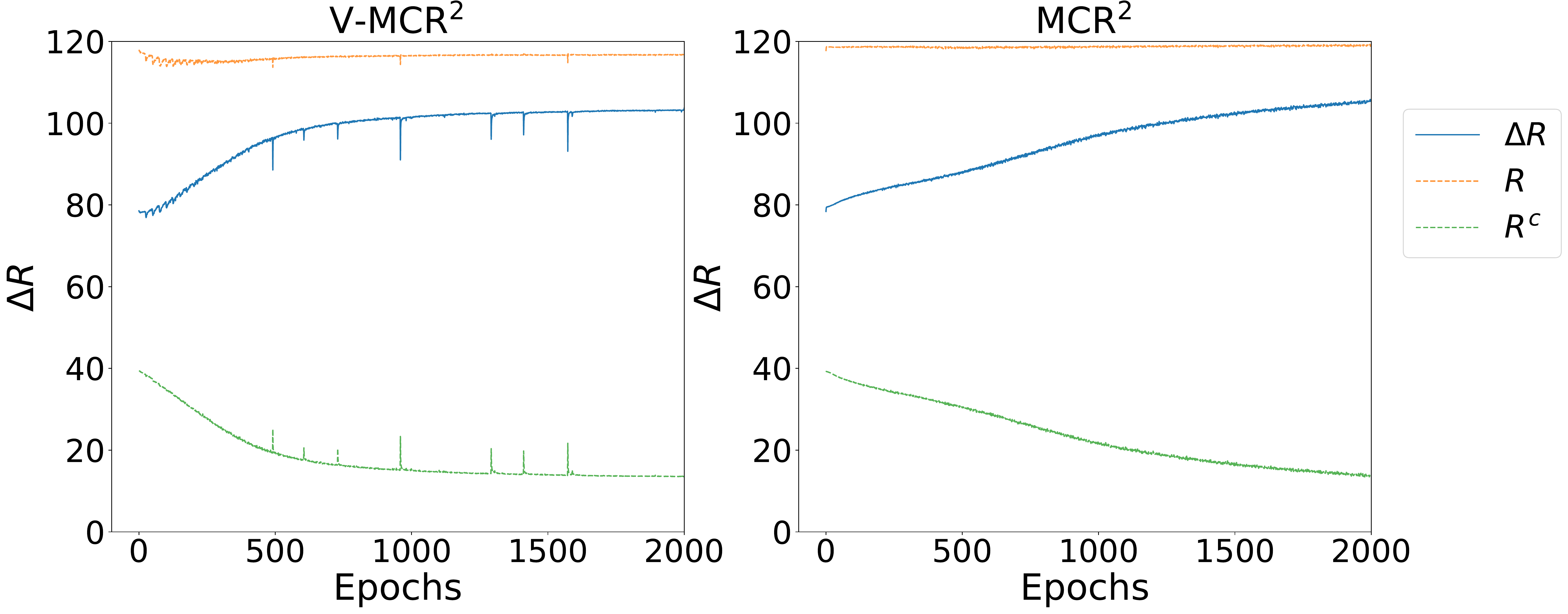}
      \includegraphics[width=\linewidth]{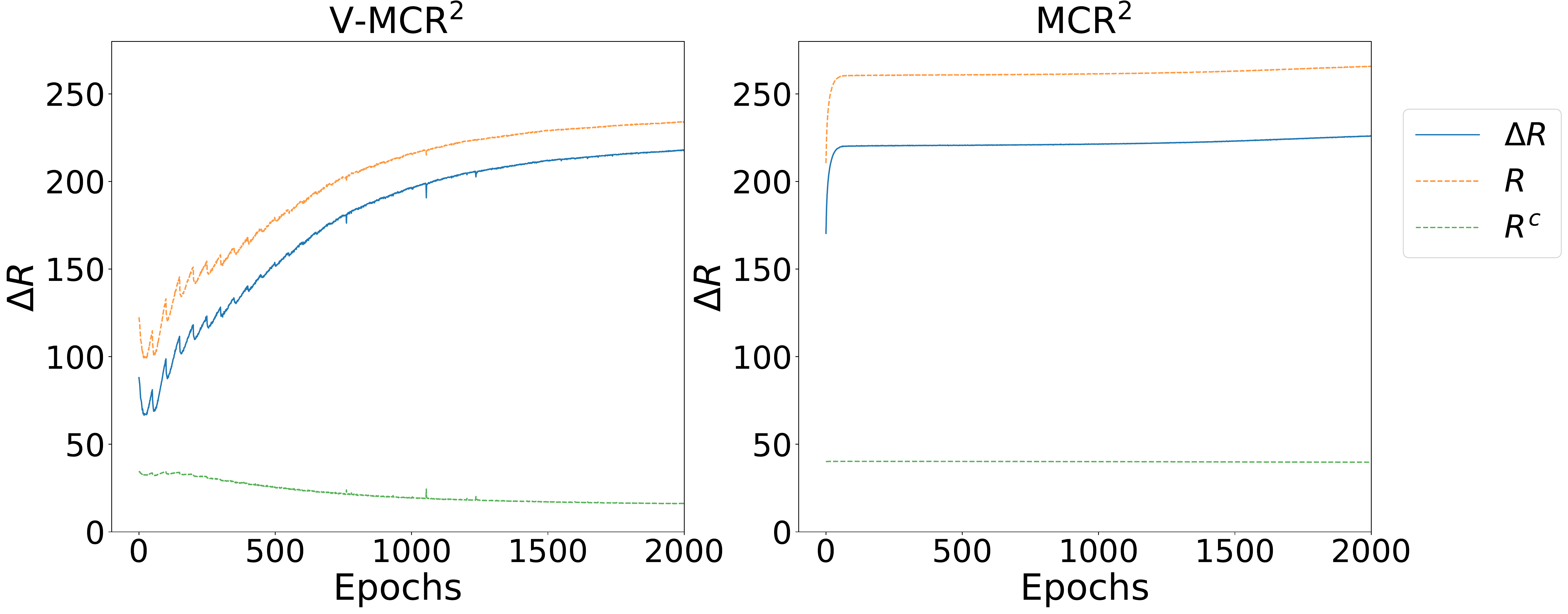}
      \caption{\textbf{CIFAR-100} $d = 100$ (top), $d = 500$ (bottom)}
      \label{fig:cifar100_loss}
    \end{subfigure}
    \begin{subfigure}{0.48\textwidth}
      \centering
      \includegraphics[width=\linewidth]{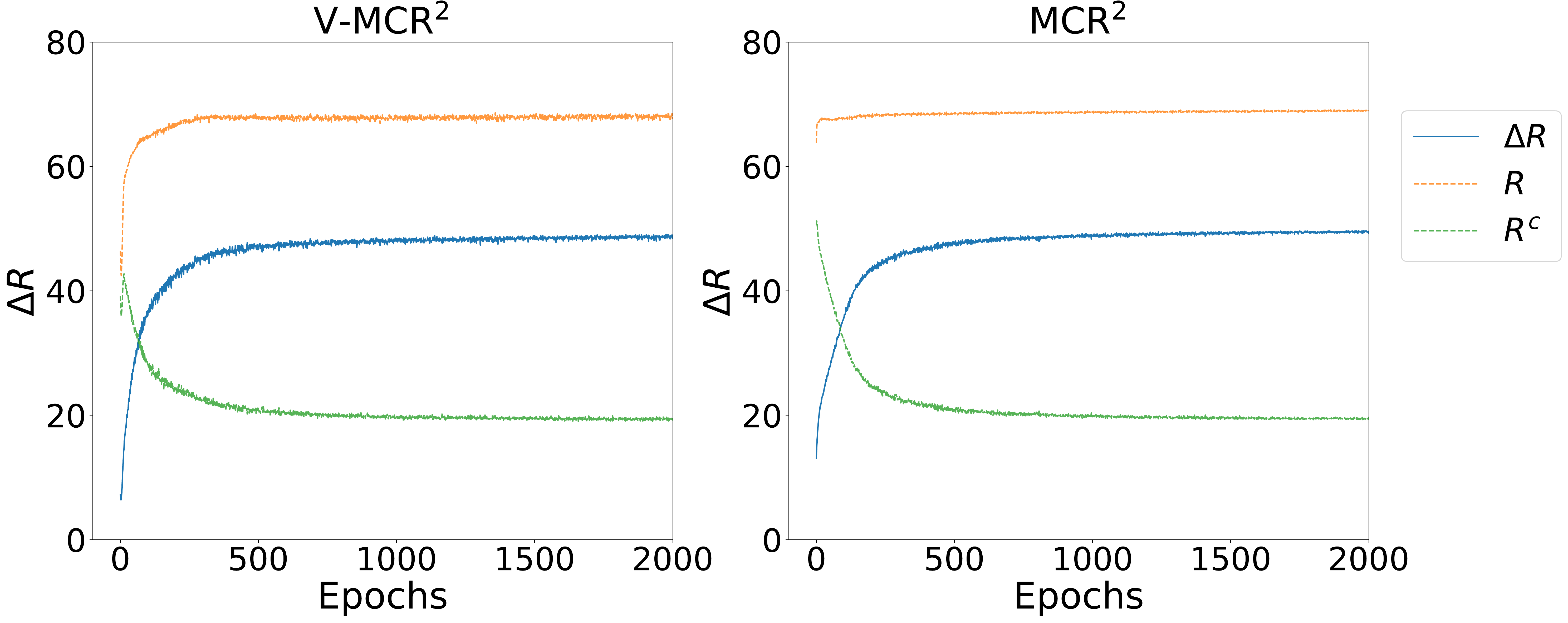}
      \caption{\textbf{CIFAR-10}}
      \label{fig:cifar10_loss}
    \end{subfigure} 
    \begin{subfigure}{0.48\textwidth}
      \centering
      \includegraphics[width=\linewidth]{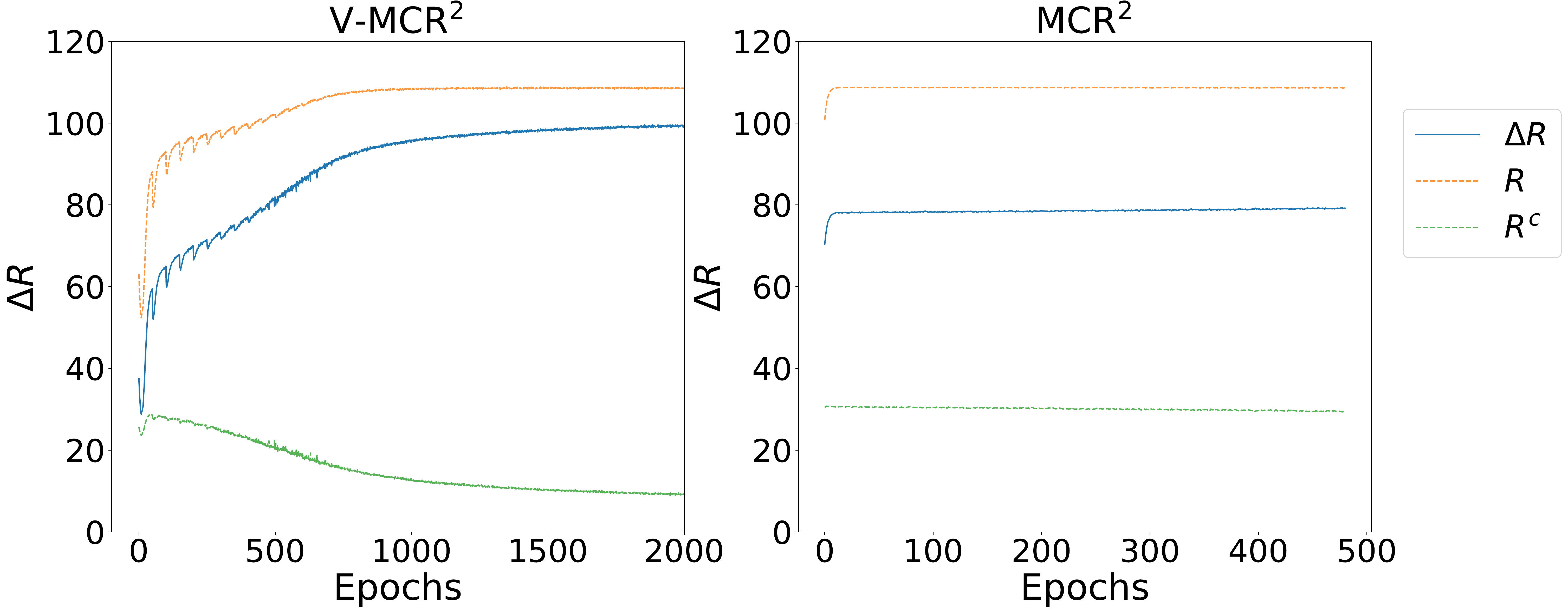}
      \includegraphics[width=\linewidth]{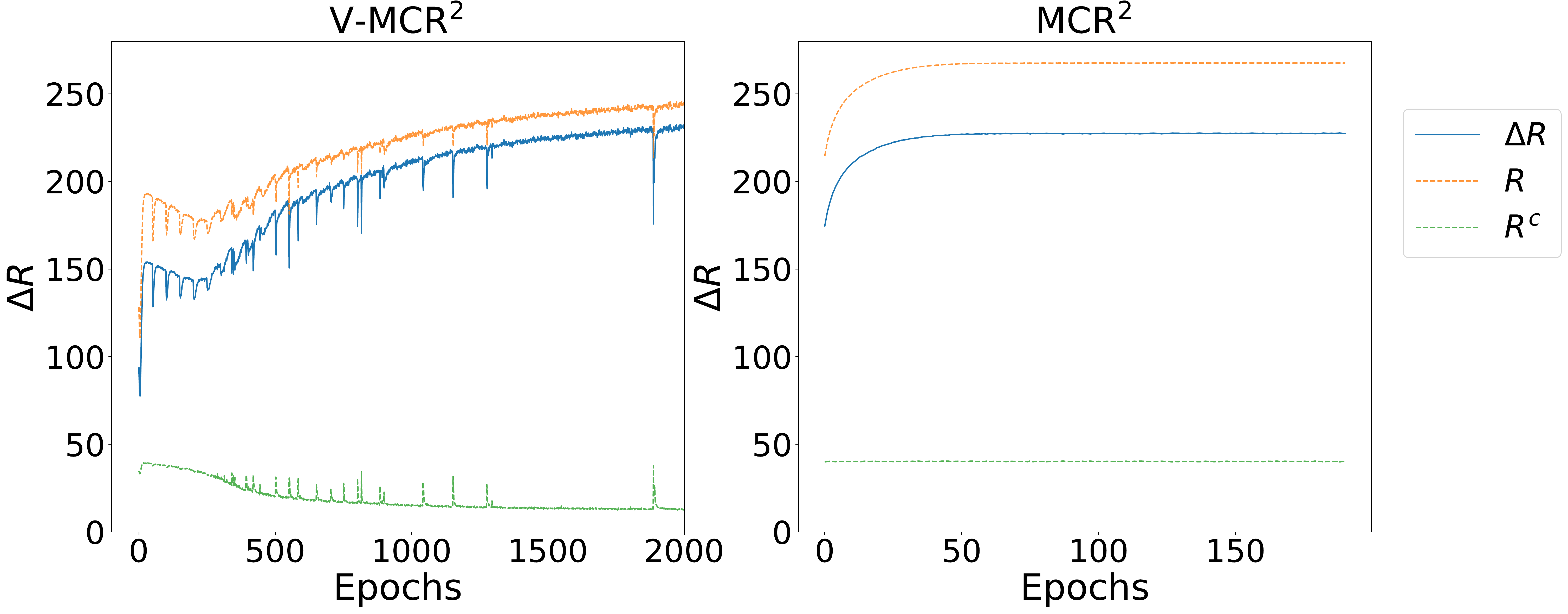}
      \caption{\textbf{Tiny ImageNet} $d = 200$ (top), $d = 500$ (bottom)}
      \label{fig:imagenet_loss}
    \end{subfigure}
    \end{multicols}
    \vspace{-2mm}
    \caption{\textbf{Convergence of training }$\mb{\Delta R}$. We compare the $\Delta  R(  \mb{Z}_\theta)$ of the training data over epochs for V-MCR$^2$ and MCR$^2$. For both V-MCR$^2$ and MCR$^2$, the network is optimized by stochastic gradient descent with a learning rate of $10^{-3}$. All training runs are 2000 epochs, excluding the MCR$^2$ runs for Tiny ImageNet, which we stop at 500 and 200 for $d=200$ and $d=500$ respectively, due to significant computational cost from the high number of classes. Also, note that for V-MCR$^2$, we often observe small undulations in the training loss due to the regular reinitialization by latching.\vspace{-2mm} } 
    \label{fig:convergence}
\end{figure*}

\subsection{Computational Efficiency}
Across datasets, we compare the wall-clock time to train one epoch using the MCR$^2$ \eqref{eq:mcr} and V-MCR$^2$ \eqref{eqn:variational-objective} formulations. The batch size is set to be the same for both models, and all our experiments are performed using PyTorch 1.9.0 and Python 3.8.11 on Nvidia A100-SXM4 GPUs with 40GB of CUDA memory for fair comparison. 
As shown in Table \ref{tab:wall-clock}, V-MCR$^2$ training completes approximately $5\times$ faster on CIFAR-100 and $12\times$ faster on Tiny ImageNet. Even for datasets containing a small number of classes, i.e., MNIST and CIFAR-10, we observe a $1.5-2\times$ speedup. Note that the overhead for the original MCR$^2$ model escalates significantly as the number of classes increases, so we expect even greater improvements in training efficiency with datasets with more classes. 
Even for Tiny ImageNet, training until convergence using MCR$^2$ becomes nearly impractical, while it is easily handled by V-MCR$^2$.

\begin{table}[!htb]
    \centering
    \begin{tabular}{|c|c|c|}
    \hline
    \textbf{Dataset} & \textbf{MCR$^2$} & \textbf{V-MCR$^2$} \\
         \hline 
          MNIST & 11.56  &  6.29\\ 
          \hline 
         CIFAR-10 & 33.06  & 20.71\\ 
          \hline 
         CIFAR-100 & 157.45  &  31.14\\ 
          \hline 
         Tiny ImageNet & 527.85  & 44.23\\ 
          \hline
    \end{tabular}
    \caption{\textbf{Wall-Clock Time per Epoch (in seconds).} We compare the wall-clock time to complete one epoch of training. MNIST and CIFAR-10 use a batch size of 1000. For CIFAR-100 and Tiny ImageNet, we use a batch size of 2000. }
    \label{tab:wall-clock}
\end{table}

Additionally, we compare the true $\Delta R(\mb Z_\theta)$ (i.e., computing the original \mcr2 objective with the iterates of the V-\mcr2 model) over training epochs and observe that both models 1) take approximately the same number of training epochs to converge and 2) reach approximately the same final $\Delta R(\mb Z_\theta)$ objective value at convergence.
Thus, V-MCR$^2$ does not require additional epochs to obtain a good solution which might offset the increased efficiency per epoch. As shown in Figure \ref{fig:convergence}, on MNIST and CIFAR-10, V-MCR$^2$ and MCR$^2$ follow a similar training loss trajectory across epochs. For CIFAR-100, we observe that the convergence rate depends on the dimension of the features/representations, $d$. For a feature dimension of $d=100$ for CIFAR-100, we observed similar number of epochs to convergence for MCR$^2$ and V-MCR$^2$. With a feature dimension of $d=500$, we observe across 5 seeds that training with the original MCR$^2$ leads to a rapid convergence to a poor local optima where the expansion term $R(\mb Z)$ increases rapidly but the compression term $R_c(\mb Z)$ remains the same. On the other hand, although V-MCR$^2$ requires more epochs to converge, we observe the standard expected behavior where $R(\mb Z)$ increases and $R^c(\mb Z)$ decreases and the final solution is of much higher quality (see \ref{sec:representations}). Experiments on Tiny ImageNet show similar behaviors.


\begin{figure}[!htb]
    \centering
    \begin{subfigure}{.45\textwidth}
      \centering
      \includegraphics[width=0.45\linewidth]{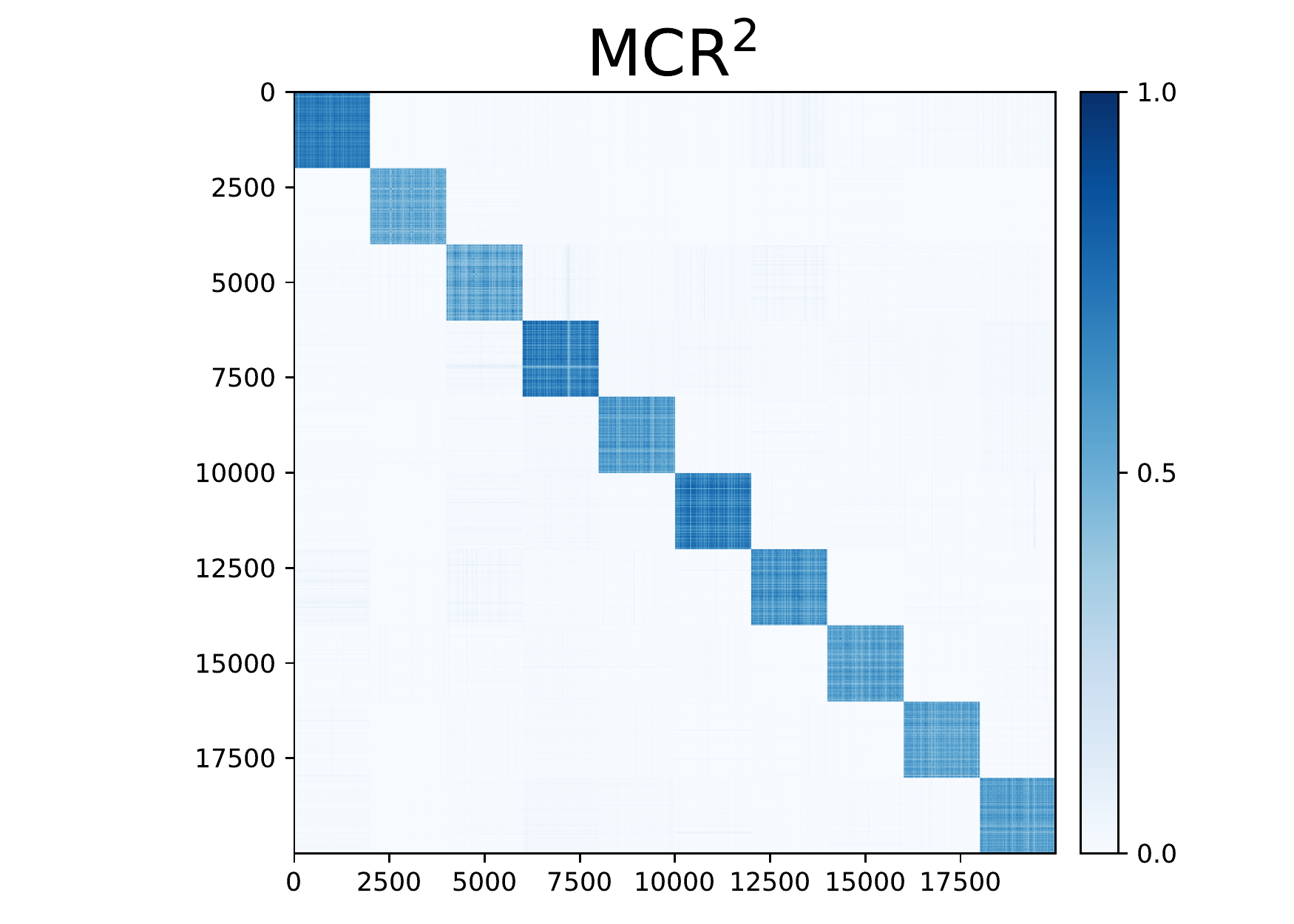}
      \includegraphics[width=0.45\linewidth]{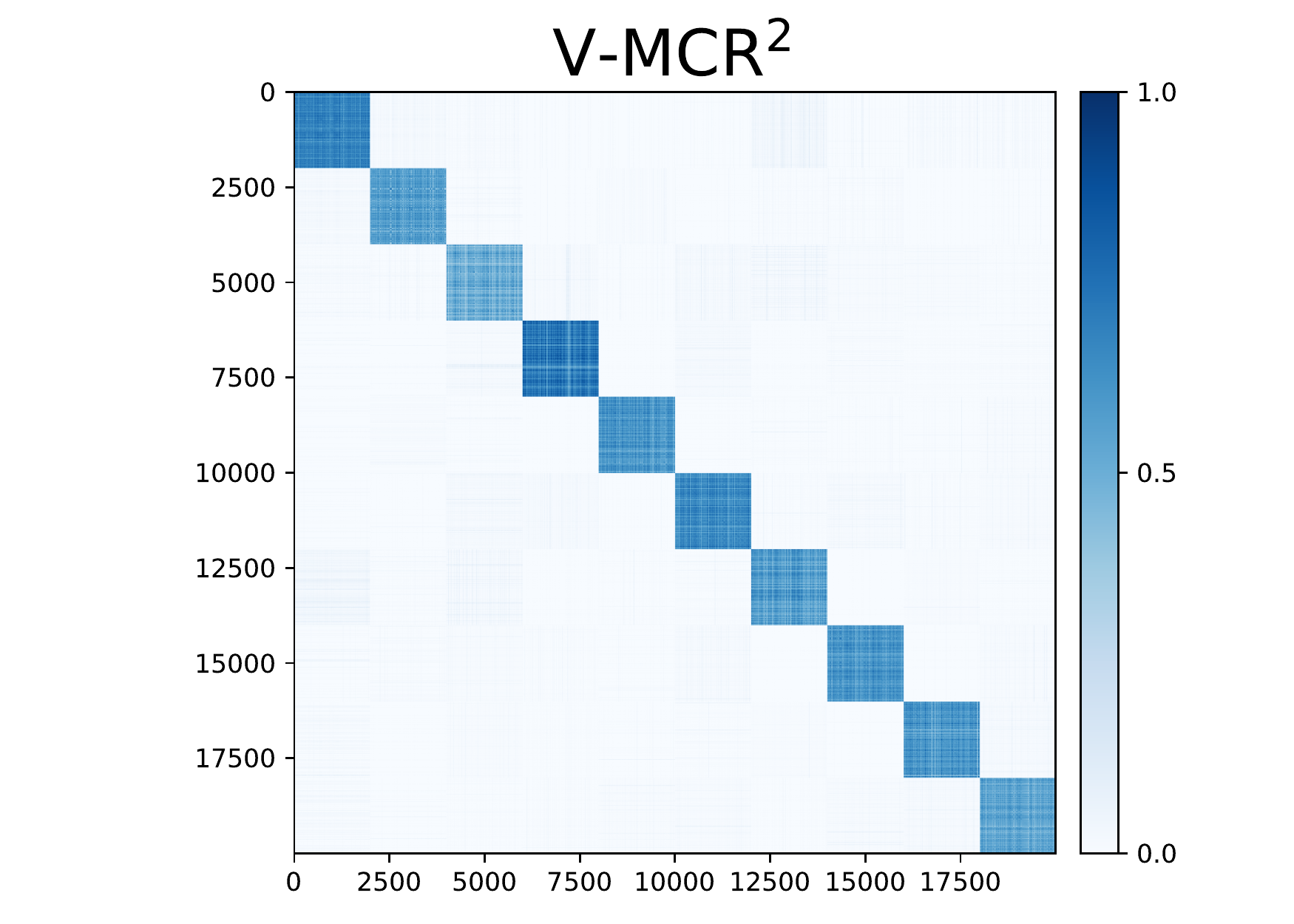}
      \caption{MNIST}
      \label{fig:mnist_heatmap}
    \end{subfigure} 
    \begin{subfigure}{.45\textwidth}
      \centering
      \includegraphics[width=0.45\linewidth]{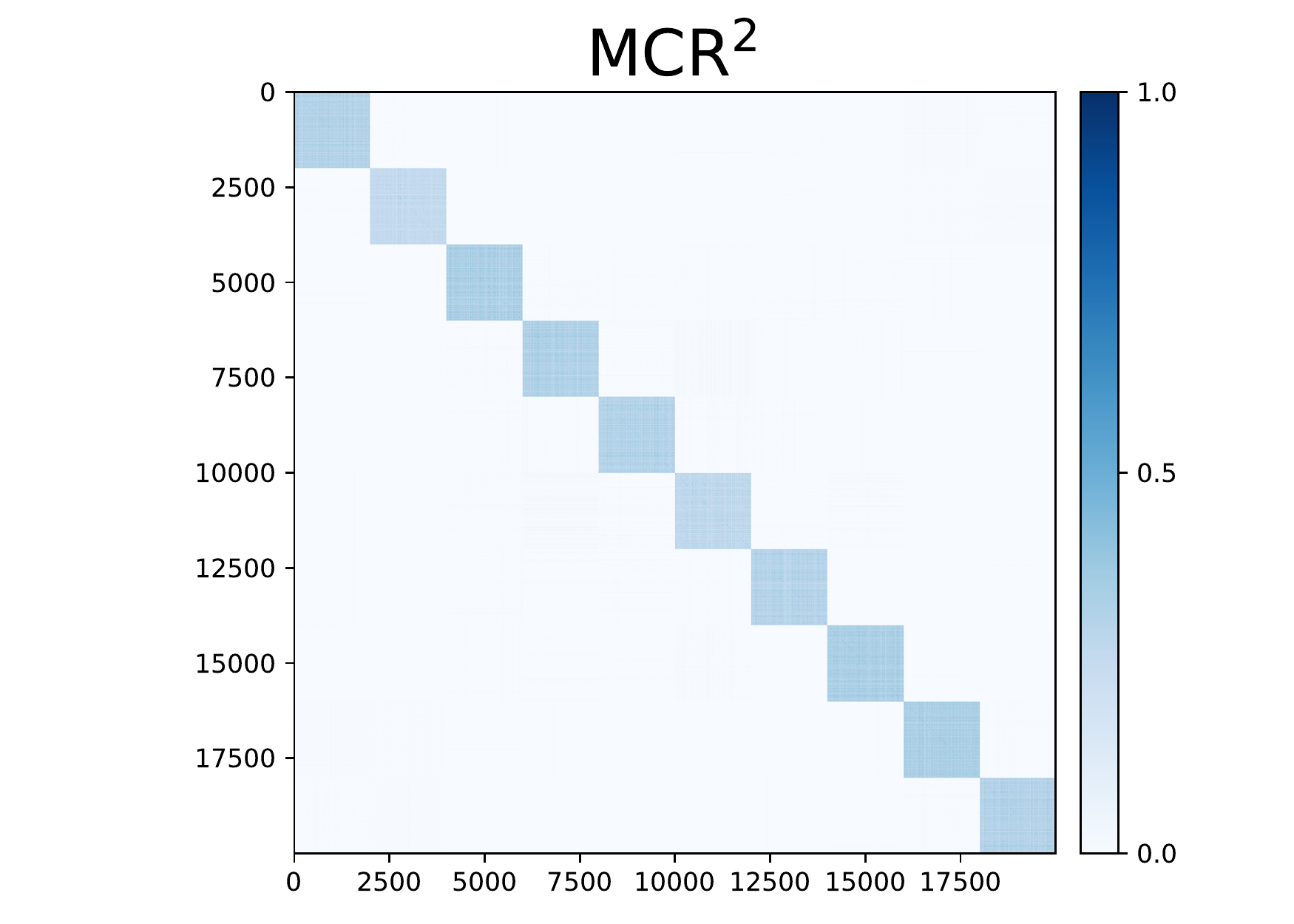}
      \includegraphics[width=0.45\linewidth]{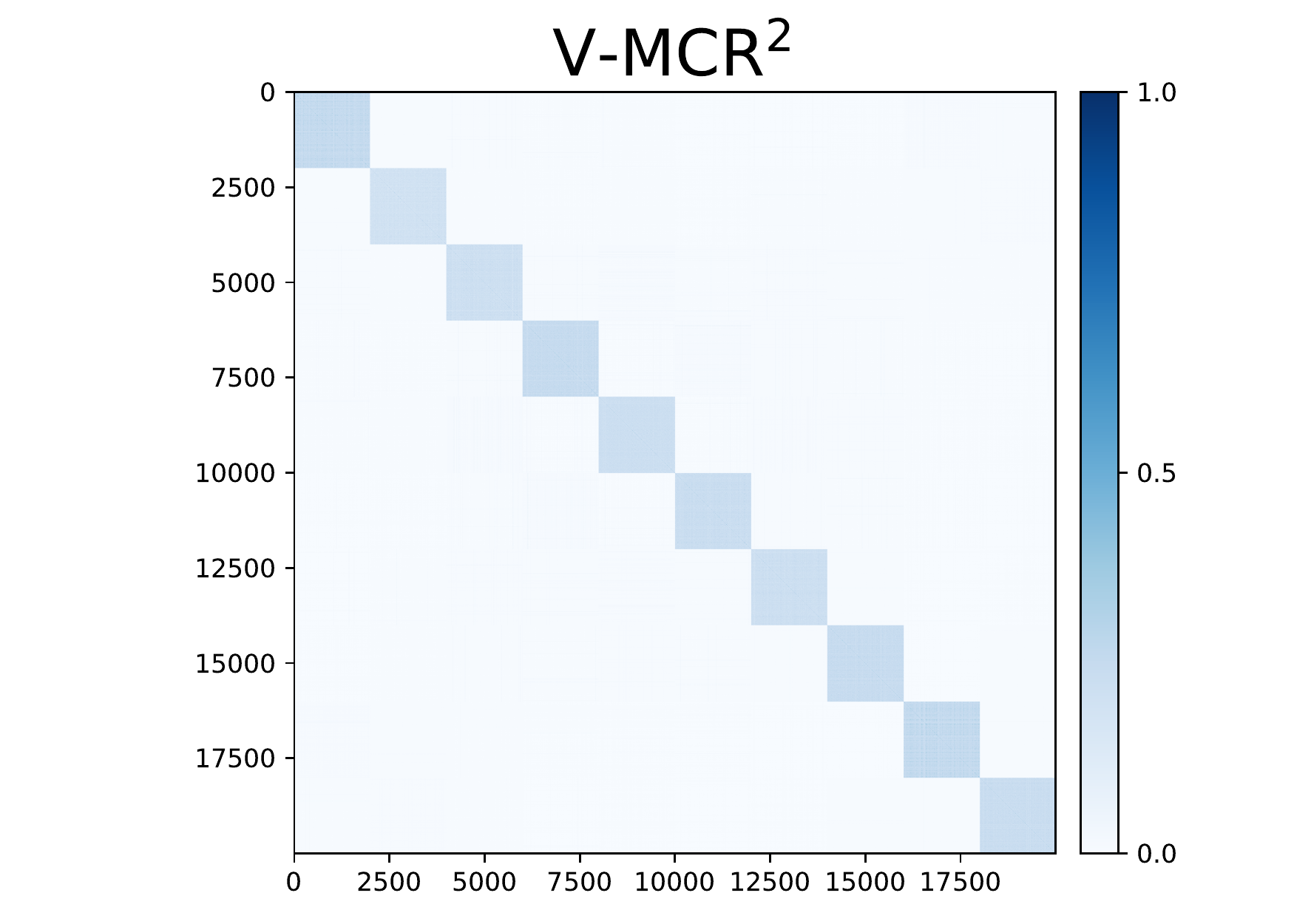}
      \caption{CIFAR-10}
      \label{fig:cifar10_heatmap}
    \end{subfigure} 
    \begin{subfigure}{.45\textwidth}
      \centering
      \includegraphics[width=0.45\linewidth]{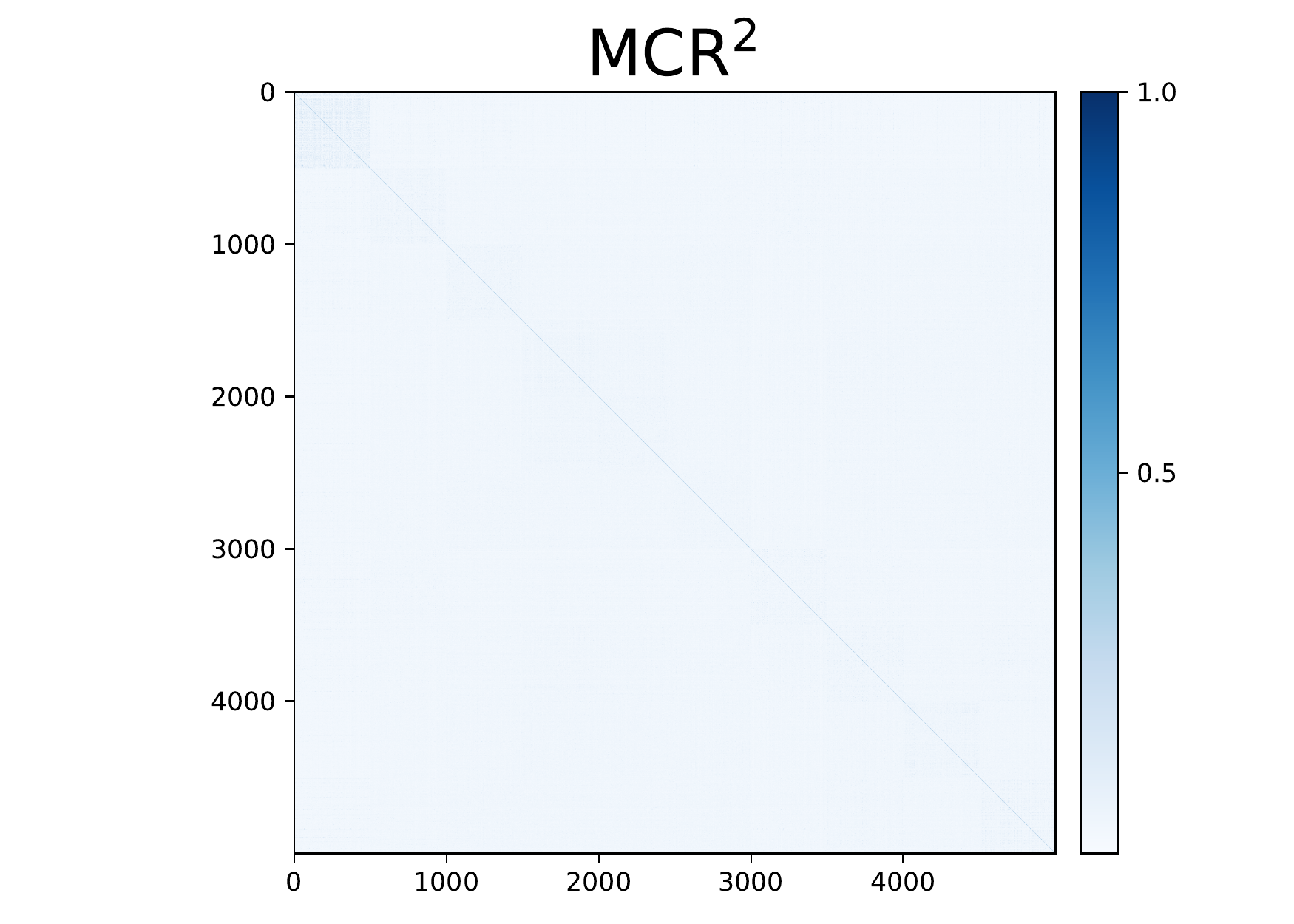}
      \includegraphics[width=0.45\linewidth]{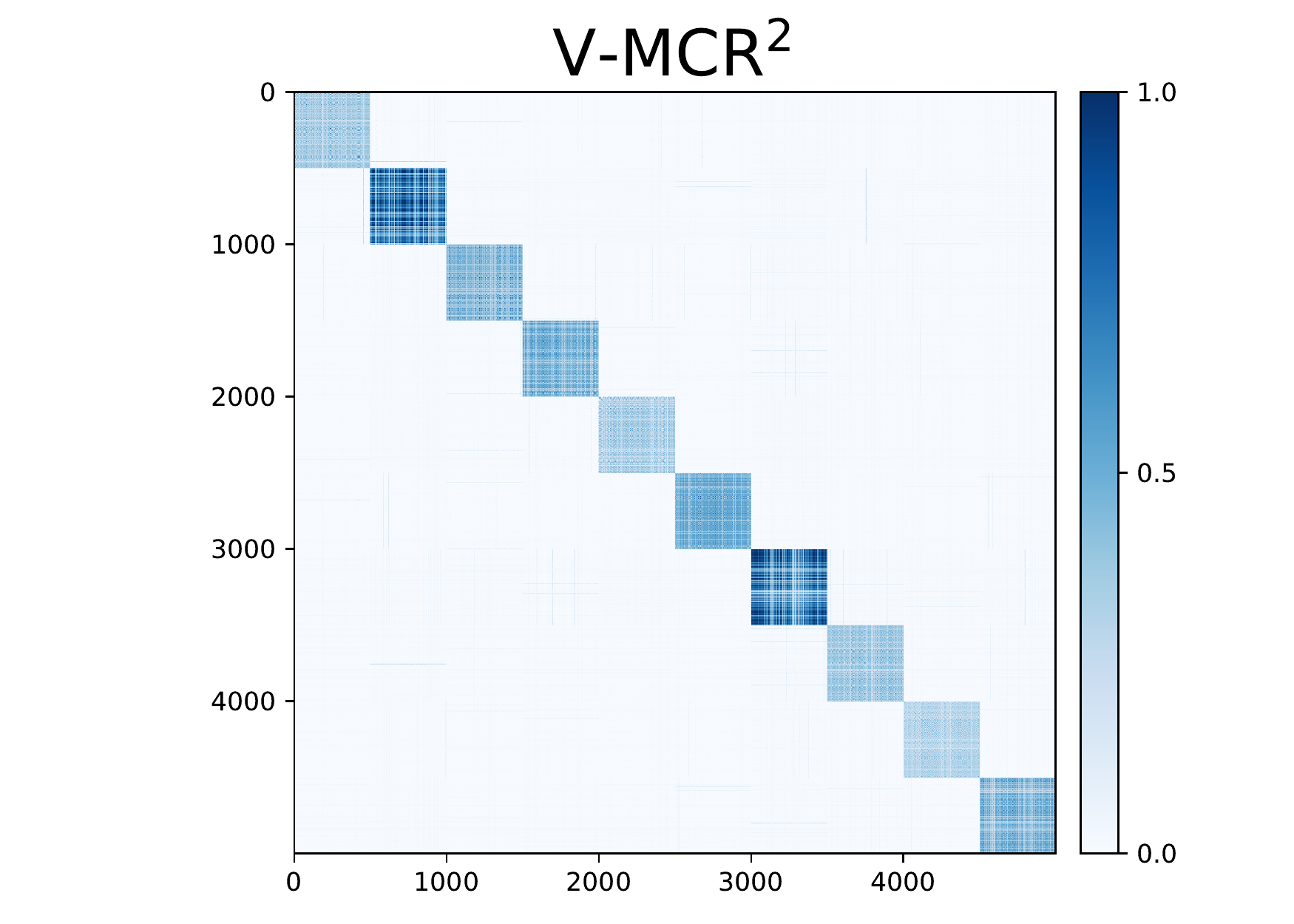}
      \caption{CIFAR-100}
      \label{fig:cifar100_heatmap} 
    \end{subfigure}
    \begin{subfigure}{.45\textwidth}
      \centering
      \includegraphics[width=0.45\linewidth]{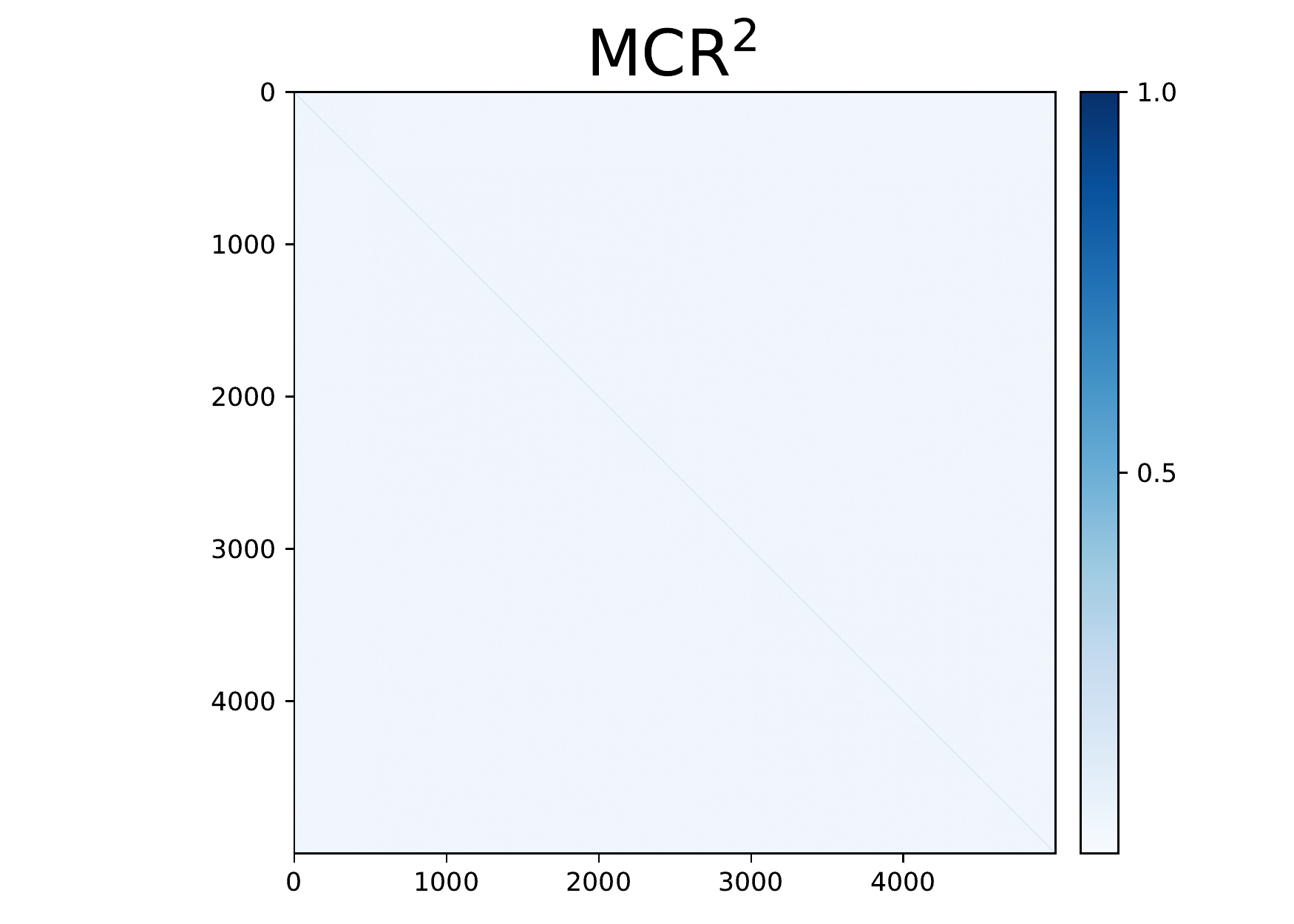}
      \includegraphics[width=0.45\linewidth]{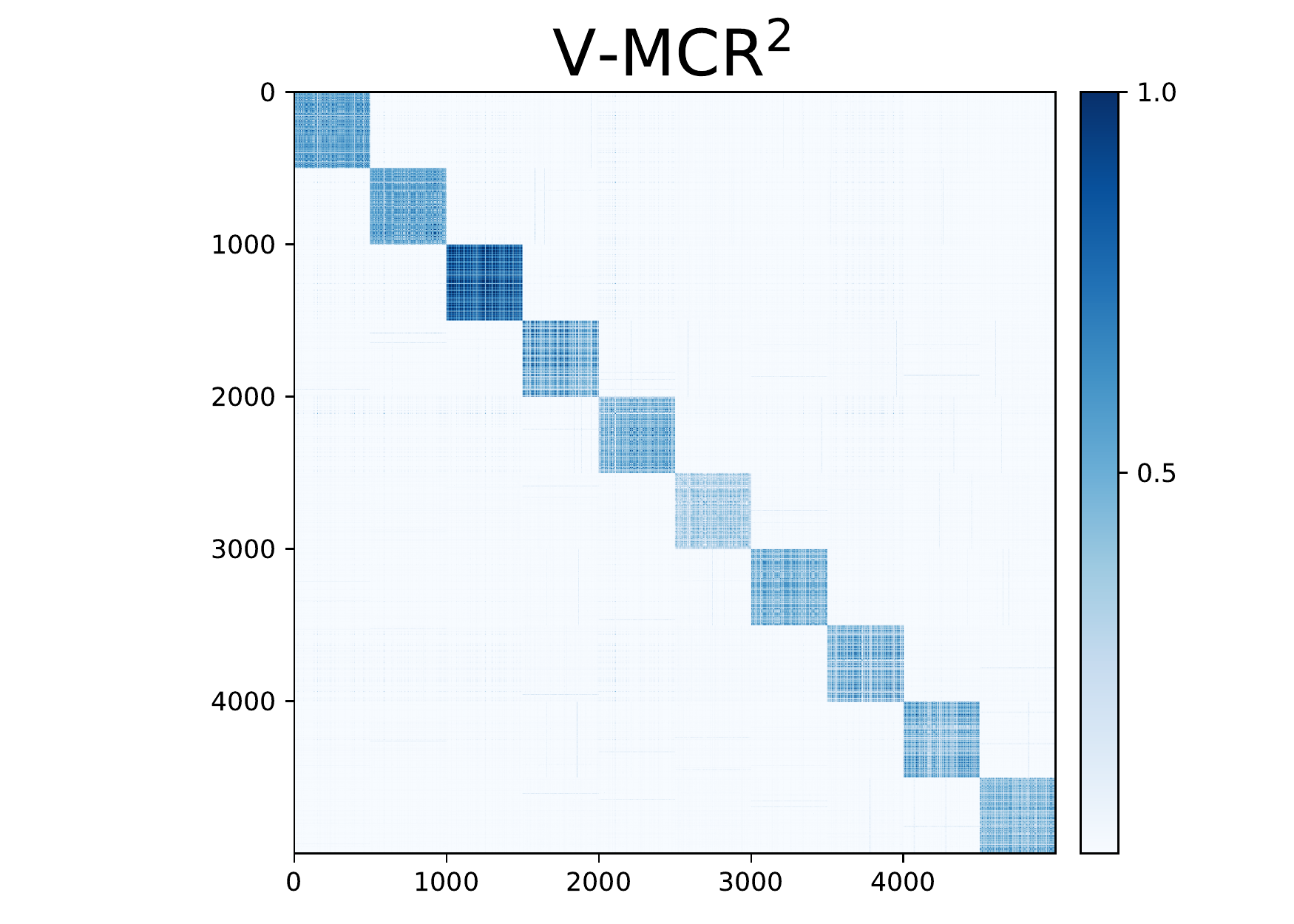}
      \caption{Tiny ImageNet}
      \label{fig:imagenet_heatmap}
    \end{subfigure}
    
    \caption{\textbf{Inner product of representations.} We plot the heatmap of $|\mb Z_\theta^\top \mb Z_\theta|$ where $\mb Z_\theta$ are the representations of the training data, ordered by the class they belong to. For CIFAR-100 and Tiny ImageNet, 10 classes are randomly chosen. If the classes lie on low-dimensional orthogonal subspaces, we expect to see a block diagonal structure.  }
    \label{fig:heatmap} 
    
    \vspace{-5mm}
\end{figure}

\subsection{V-MCR$^2$ Obtains Better Representations}
\label{sec:representations}
There are two properties of the representation that we aim to attain by optimizing the MCR$^2$ objective. We say that a representation of the training data is of `high quality' if points from different classes lie on separate, orthogonal subspaces, and the union of these subspaces span as many dimensions as possible. In particular, the orthogonal property is important in order to classify the points using the nearest subspace algorithm in Section \ref{sec:nearsub}. To check the orthogonality of subspaces learned by $f_\theta$, we report the inner product between every pair of training points as a heatmap in Figure \ref{fig:heatmap}. Namely, we sort the columns of $\mb Z_\theta$ by class 
and compute $|\mb Z_\theta^\top \mb Z_\theta|$. Ideally, we want $|\mb Z_\theta ^\top \mb Z_\theta|$ to have a block diagonal structure, with $(\mb Z_\theta)_i^\top (\mb Z_\theta)_j \approx 0$ for $i, j$ notating points from different classes. 

Figure \ref{fig:heatmap} shows heatmaps for MNIST, CIFAR-10, and CIFAR-100 after 2000 training epochs, when $\Delta R$ has converged for both MCR$^2$ and V-MCR$^2$ (Figure \ref{fig:convergence}). For MNIST and CIFAR-10, note that both MCR$^2$ and V-MCR$^2$ obtain a block diagonal structure. However, on CIFAR-100 and Tiny Imagenet, we observe no block diagonal structure after MCR$^2$ training, whereas we see a clear block diagonal structure for V-MCR$^2$. These findings suggest that V-\mcr2 training is more robust to avoiding poor local minima than training on the original MCR$^2$ model, particularly as the number of classes increase. 
We leave a rigorous study of these phenomena for a future work.

\subsection{Performance on Classification Tasks}
In Table \ref{tab:performance}, we present test accuracies on the four aforementioned datasets when trained under original MCR$^2$ and V-MCR$^2$ objectives. In addition, we also train a separate model for each dataset by using CE loss as a reference classifier and report its test accuracy. Notice that the goal of our study here is {\em not} about achieving the best possible classification accuracy on these datasets -- the training procedure and architectures used here are not optimal for that purpose\footnote{We use simple training practices (\eg input data downsampling, minimal data augmentation and training from scratch), leading to about 15\% and 30\% difference in performance on CIFAR-100 and Tiny ImageNet as reported in \cite{yun2020regularizing} from much more carefully engineered training recipes.
}. Instead, we make fair comparison of all methods on the same networks and datasets to justify the computational efficiency and effectiveness of the proposed method. To ensure fairness, we initialize this reference model with the same architecture along with other hyperparameters\footnote{For experiments using CE, we use a higher learning rate of $10^{-2}$ to improve convergence for CE training.} as in V-MCR$^2$ experiments and attach a final linear classifier with output dimension corresponding to the number of classes.

As shown in Table \ref{tab:performance}, when trained on datasets with a small number of classes, all three training objectives can reach competitive classification performance. 
We can observe that both MCR$^2$ and V-MCR$^2$ objectives are comparable to CE in these small-scale datasets.  Again on CIFAR-100 we observe that the poor local minima obtained by training the original \mcr2 objective results in a poor test accuracy, while the V-MCR$^2$ model achieves comparable performance to training the same network with CE.
%
%
Due to limited resources and heavy computational requirements of MCR$^2$ training with a large number of classes, we can only report the result on Tiny ImageNet by optimizing MCR$^2$ after 200 epochs while the other two objectives complete the full training session of 2000 epochs. 


\begin{table}[!htb]
\begin{small}
    \centering
    \begin{tabular}{|c|c|c|c|}
    \hline
         \textbf{Dataset} & \textbf{Objective} & \textbf{Training} $\mb{\Delta R}$  & \textbf{Test Accuracy} \\
         \hline 
          & MCR$^2$ & 44.6429 & 0.9785 \\ 
         MNIST & V-MCR$^2$ & 44.2117 &  \textbf{0.9788}\\ 
          & CE & - &  0.9738\\ 
          \hline 
          & MCR$^2$ & 49.40 & 0.8956\\ 
         CIFAR-10 & V-MCR$^2$ & 48.43 &  \textbf{0.8997}\\ 
          & CE & - &  0.8665\\ 
          \hline 
          & MCR$^2$ & 226.0519 &  0.2421 \\ 
         CIFAR-100 & V-MCR$^2$ & 218.0185 &  \textbf{0.5872}\\ 
          & CE & - & 0.5840\\ 
          \hline 
         Tiny & MCR$^2$ & 227.6468 &  0.1319\\ 
         ImageNet & V-MCR$^2$ & 231.1538&  \textbf{0.2665}\\ 
         200 & CE & - &  0.1907\\ 
          \hline
    \end{tabular}
    \caption{\textbf{Comparison of classification performance.} We evaluate the training $\Delta R$ and test accuracy of CE, MCR$^2$, and V-MCR$^2$ after 2000 training epochs for MNIST, CIFAR-10, and CIFAR-100. For Tiny ImageNet, we report the results after 200 epochs for MCR$^2$ due to very slow training, though note that $\Delta R$ had already converged (Fig. \ref{fig:imagenet_loss}). For V-MCR$^2$ and CE, results are after 2000 epochs. For CIFAR-100 and Tiny ImageNet, we report results for $d = 500$. See Appendix for results for other choices of $d$. }
    \label{tab:performance}
\end{small}
\end{table}

\section{Conclusion}
Building on Yu et al. \cite{mcr}, we propose an alternative cost-efficient formulation of the MCR$^2$ objective that is scalable to datasets with a large number of classes. Namely, for CIFAR-100 and Tiny ImageNet, we observed a $5 \times$ and $12\times$ speedup per training epoch, respectively. The gain would be even more significant as the number of classes increases. Additionally, we show that we do not make any compromises when it comes to performance by using this approximate, variational formulation. In fact, in all datasets we tested, not only does V-MCR$^2$  reach similar $\Delta R$ values as MCR$^2$, the learned representations of V-MCR$^2$ were just as good, oftentimes better, than those learned by MCR$^2$ for the same number of training epochs. On a related note, we observe an interesting phenomenon that directly optimizing the MCR$^2$ objective is not only slow, but often completely fails to learn the desired orthogonal subspace structures when the number of classes increases even after $\Delta R$ has successfully converged. In our experiments, we found that V-MCR$^2$ is surprisingly not prone to this problem, suggesting that V-MCR$^2$ allows for better control over learning the desired representations. By these means, we find that V-MCR$^2$ is a very promising adaptation of MCR$^2$ and we leave a rigorous comparison of the representations learned by MCR$^2$ and V-MCR$^2$ for future work.

\medskip \noindent
\textbf{Acknowledgements}
This work was partially supported by NSF Fellowship DGE2139757, NSF grants 1704458 and 2031985, and the Northrop Grumman
Mission Systems Research in Applications for Learning Machines (REALM) initiative. Yi Ma was funded by ONR grants N00014-20-1-2002 and N00014-22-1-2102, the joint Simons Foundation-NSF DMS grant $\#2031899$.

\newpage 
{
\bibliographystyle{ieee_fullname}
\bibliography{egbib}
}
\onecolumn

\section{Appendix}
\subsection{Derivation of the Lipschitz Constants}
To ensure stability of the gradient-based method, we scale the learning rate of $\Gamma$ and $\mb A$ by (the inverse of) approximate upper-bounds of the Lipschitz constants of the gradients, $\frac{1}{L_\Gamma}$ and $\frac{1}{L_{\mb A}}$, respectively. To simplify calculations, we only bound the Lipschitz constant of the gradients with respect to the matrix approximation term $M$, as we note this term typically dominates the Lipschitz constant of the gradient. Additionally, for notational simplicity we show the derivation for balanced classes which implies $\frac{1}{m \gamma_j} = \frac{k}{m}$.

Now, consider the function $f(\Gamma,\mb A)=\frac{\mu k }{2m}\sum_{j=1}^k\norm{\mathbf{Z} \Diag(\mb{\Pi})_j \mathbf{Z}^\top - \Gamma \Diag(\mb A_j) \Gamma^\top}{F}^2$, and note that with some simple algebra one can show the following equivalence:

\begin{equation}
\begin{split}
f(\Gamma,\mb A) =& \frac{\mu k}{2 m} \sum_{j=1}^k \| \mb Z \Diag(\mb \Pi_j) \mb Z^\top - \Gamma \Diag(\mb A_j) \Gamma^\top \|_F^2  \\
=&\frac{\mu k}{2 m} \langle \mb \Pi \mb \Pi^\top, (\mb Z^\top \mb Z)^{\odot 2} \rangle - \frac{\mu k}{m}\langle \mb A \mb \Pi^\top, (\Gamma^\top \mb Z)^{\odot 2} \rangle + \frac{\mu k}{2 m} \langle \mb A \mb A^\top, (\Gamma^\top \Gamma)^{\odot 2} \rangle 
\end{split}
\end{equation}
where $(\cdot)^{\odot 2}$ denotes raising to the second power entry-wise.

From this, the relevant gradients are
\begin{align}
    \nabla_\Gamma f & = -\frac{2\mu k}{m} \sum_{j=1}^k \left( \underbrace{\mathbf{Z} \Diag(\mb{\Pi}_j) \mathbf{Z}^\top \Gamma \Diag(\mb A_j)}_{:=g_j(\Gamma)} -  \underbrace{\Gamma \Diag(\mb A_j) \Gamma^\top \Gamma \Diag(\mb A_j)}_{:=h_j(\Gamma)} \right)\\
    \nabla_{\mb A} f &= \frac{\mu k}{m} \left[ (\Gamma^\top \Gamma)^{\odot 2} \mb A - (\Gamma^\top \mb Z)^{\odot 2} \mb \Pi \right] 
\end{align}
We first bound the Lipschitz constant of $\nabla_\Gamma f$. For $g_j$, we have 
\begin{align}
    \frac{\|g_j(\Gamma+ \Delta \Gamma) - g_j(\Gamma)\|_F}{\|\Delta \Gamma\|_F} &= \frac{\|\mathbf{Z} \Diag(\mb{\Pi}_j) \mathbf{Z}^\top \Delta\Gamma \Diag(\mb A_j)\|_F}{\|\Delta \Gamma \|_F} \\
    & \leq \|\mathbf{Z} \Diag(\mb{\Pi}_j) \mathbf{Z}^\top \|_F  \|\mb A_j\|_{\infty},
\end{align}
where the inequality is due to the sub-multiplicative property of the Frobenius norm and the operator norm inequality. Now we consider $h_j$. Here we make the simplifying assumption that in addition to each column of $\Gamma$ being a unit vector, $\Gamma$ also approximately satisfies $\Gamma^\top \Gamma = \mb I$, which is known to be true near the globally optimal solution from the analysis of \cite{mcr}. Now, we have the following approximation 
\begin{equation}
    \Bar{h}_j(\Gamma) \approx \Gamma \Diag(\mb A_j)^2.
\end{equation} which implies,
\begin{align}
    \frac{\|\Bar{h}_j(\Gamma+ \Delta \Gamma) - \Bar{h}_j(\Gamma)\|_F}{\|\Delta \Gamma\|_F} &\approx \frac{\|\Delta\Gamma \Diag(\mb A_j)^2\|_F}{\|\Delta \Gamma \|_F}  \\ & \leq 
    \|\mb A_j\|_\infty^2
\end{align}
Since $f(\Gamma, \mb A)$ has a summation over class $j$, we get the following approximate upper bound for the Lipschitz constant
\begin{equation}
    L_\Gamma = \frac{2\mu k}{m} \sum_{j=1}^k \|\mathbf{Z} \Diag(\mb{\Pi}_j) \mathbf{Z}^\top \|_F  \|\mb A_j\|_{\infty} + \|\mb A_j\|_\infty^2
\end{equation} 

For the upper bound of the Lipschitz constant of $\nabla_{\mb A} f$ we simply have 
\begin{align}
     \frac{\left \|\nabla_{\mb A} f \big{|}_{\mb A + \Delta \mb A} - \nabla_{\mb A} f\big{|}_{\mb A}\right \|_F}{\|\Delta \mb A\|_F} &=
     \frac{\| \frac{\mu k}{m} (\Gamma^\top \Gamma)^{\odot 2} \Delta \mb A \|_F} { \| \Delta \mb A\|_F} \\
     &\leq \frac{\mu k}{m}\|(\Gamma^\top \Gamma)^{\odot 2}\|_F = L_A
\end{align}

\subsection{Architecture}
We utilize the following architectures for the experiments in Section 5. We use a fairly simple architecture for MNIST, and for the other datasets, we use a slightly modified version of ResNet18. Note \texttt{d} is the feature dimension. 

\setcounter{algorithm}{0}
\floatname{algorithm}{Architecture}
\begin{algorithm*}
\caption{Neural Network Architecture for MNIST}\label{alg:mnist_arch}
\begin{algorithmic}[1]
\State \texttt{Conv2d(in\_channel=1, out\_channel=32, kernel=3, stride=1)}
\State \texttt{ReLU()}
\State \texttt{Conv2d(in\_channel=32, out\_channel=64, kernel=3, stride=1)}
\State \texttt{ReLU()}
\State \texttt{MaxPool2d(kernel=2, stride=None)}
\State \texttt{Dropout(p=0.25)}
\State \texttt{Flatten()}
\State \texttt{Linear(12544, d)}
\State \texttt{ReLU()}
\State \texttt{Dropout(p=0.5)}
\State \texttt{Linear(d, d)}
\State \texttt{Normalize()}
\end{algorithmic}
\end{algorithm*}



For CIFAR-10/100 and Tiny Imagenet, we use the Torchvision ResNet18 model as the featurizer, but we remove the final layer of the ResNet18 and replace it with the following to reshape the output into the desired feature dimension $d$. We also normalize the output at the end to fulfill the constraint that the features lie on the unit sphere in the MCR$^2$ objective.

\floatname{algorithm}{Architecture}
\begin{algorithm*}
\caption{Reshaping Layers for ResNet18}\label{alg:resnet_arch}
\begin{algorithmic}[1]
\State  \texttt{Linear(512, 512, bias=False)}
\State  \texttt{BatchNorm1d()}
\State \texttt{ReLU()}
\State \texttt{Linear(512, d, bias=True)}
\State \texttt{Normalize()}
\end{algorithmic}
\end{algorithm*}

For cross-entropy experiments, we add another linear layer on top to map the output of the featurizer to logits.

\subsection{Data Augmentation}
We utilize the following data augmentations for the experiments in Section 5. 


\setcounter{algorithm}{0}
\floatname{algorithm}{Transformations}
\begin{algorithm*}
\caption{Transformations for MNIST, CIFAR-10, CIFAR-100}\label{alg:transform1}
\begin{algorithmic}[1]
\State \texttt{import torchvision.transforms as transforms}
\State \texttt{TRANSFORM = transforms.Compose([}
\State \quad \quad \quad \quad  \texttt{transforms.RandomCrop(32, padding=8),}
\State \quad \quad \quad \quad \texttt{transforms.RandomHorizontalFlip(),}
\State \quad \quad \quad \quad \texttt{transforms.ToTensor()])}
\end{algorithmic}
\end{algorithm*}

\floatname{algorithm}{Transformations}
\begin{algorithm*}
\caption{Transformations for Tiny ImageNet}
\begin{algorithmic}[1]
\State \texttt{import torchvision.transforms as transforms}
\State \texttt{TRANSFORM = transforms.Compose([}
\State \quad \quad \quad \quad\texttt{transforms.Resize(32)}
\State \quad \quad \quad \quad\texttt{transforms.RandomHorizontalFlip()}
\State \quad \quad \quad \quad\texttt{transforms.ToTensor()])}
\end{algorithmic}
\end{algorithm*}

\subsection{Additional Experiments}
The experiments conducted and presented in the main body, Table 2,  were only meant to compare the computational efficiency for different methods under the same conditions. The settings however were not chosen to optimize the classification performance since we did not conduct data augmentation or other training recipes normally adopted, e.g. see \cite{yun2020regularizing}. 

Here we report experimental results on CIFAR-100 and Tiny ImageNet by training with a similar training recipe adopted in
\cite{yun2020regularizing}. Specifically, as in \cite{yun2020regularizing}, we use PreAct ResNet-18 as the backbone and train with both cross-entropy loss and V-MCR$^2$. All networks are trained by SGD with momentum 0.9, and weight decay of $10^{-4}$. Similar to \cite{yun2020regularizing}, we set the learning rate to be $10^{-1}$ for the first half of training epochs, divide it by $10$ for the next quarter of epochs, and finally divide it again by $10$ for the remaining iterations. Additionally, we utilize the same transformations to augment the data.
Note that the only difference we make between our strategy and the one in \cite{yun2020regularizing} is the choice of batch size and total number of training epochs (both specified in Section 4.3) to ensure fair comparison between CE and V-MCR$^2$. In addition to the training strategy, we set $\mu = 10^{-1}$ for V-MCR$^2$ as we find a smaller value for $\mu$ helps stabilize training in the early stage. We keep all other hyperparameters identical to ones specified in Section 4.3.

\begin{table}[!htb]
\begin{small}
    \centering
    \begin{tabular}{|c|c|c|c|}
    \hline
         \textbf{Dataset} & \textbf{Objective} & \textbf{Training} $\mb{\Delta R}$  & \textbf{Test Accuracy} \\
         \hline
         CIFAR-100 & V-MCR$^2$ & 130.2177 &  0.6951\\ 
                  & CE & - & 0.7146\\ 
          \hline 
         Tiny ImageNet & V-MCR$^2$ & 134.9504&  0.4189\\ 
         200 & CE & - &  0.4843\\ 
          \hline
    \end{tabular}
    \caption{\textbf{Comparison of classification performance.} We evaluate the training $\Delta R$ and test accuracy of V-MCR$^2$ after 2000 training epochs for CIFAR-100 and Tiny ImageNet. For CE, we evaluate the test accuracy at each epoch and report the highest test accuracy achieved across the 2000 epochs.  \cite{yun2020regularizing} similarly achieved an accuracy 0.7529 for CIFAR-100 and 0.5647 for Tiny ImageNet for CE training. With V-MCR$^2$, we achieve a lower test accuracy. Note that the hyperparameters for V-MCR$^2$ such as latch frequency, dictionary size, and $\mu$ were not rigorously tuned. Additionally, due to the computational cost of the nearest subspace classification algorithm, we only evaluate the test performance at the 2000 epoch mark.}
    \label{tab:additional_performance}
\end{small}    
\end{table}
\vspace{-5mm}

\end{document}